\DeclareSymbolFontAlphabet{\mathbb}{AMSb}
\newtheorem{theorem}{Theorem}
\newtheorem{proposition}[theorem]{Proposition}
\newtheorem{corollary}{Corollary}[theorem]
\let\NAT@parse\undefined
\title{\LARGE \bf
Exhaustive-Serve-Longest Control for Multi-robot Scheduling Systems
\thanks{A shorter version of this paper has been submitted to the American Control Conference (ACC 2026) for possible publication.}
}
\author{Mohammad Merati$^{1}$ and David Castañón$^{2}$
\thanks{$^{1}$Divisions of Systems Engineering, Boston University, 8 St Mary's St, Boston, MA 02215, United States. {\tt\small mmerati@bu.edu}}%
\thanks{$^{2}$Department of Electrical and Computer Engineering, Boston University, 8 St Mary's St, Boston, MA 02215, United States. {\tt\small dac@bu.edu}}%
}
\begin{document}
\maketitle
\thispagestyle{empty}
\pagestyle{empty}
\begin{abstract}
We study online task allocation for multi-robot, multi-queue systems with stochastic arrivals and switching delays. Time is slotted; each location can host at most one robot per slot, service consumes one slot, switching between locations incurs a one-slot travel delay, and arrivals are independent Bernoulli processes. We formulate a discounted-cost Markov decision process and propose Exhaustive-Serve-Longest (ESL), a simple real-time policy that serves exhaustively when the current location is nonempty and, when idle, switches to a longest unoccupied nonempty location, and we prove the optimality of such policy. As baselines, we tune a fixed-dwell cyclic policy via a discrete-time delay expression and implement a first-come–first-serve policy. Across server–location ratios and loads, ESL consistently yields lower discounted holding cost and smaller mean queue lengths, with action-time fractions showing more serving and restrained switching. Its simplicity and robustness make ESL a practical default for real-time multi-robot scheduling systems.
\end{abstract}

\section{INTRODUCTION}

Real-time task allocation and routing is a key problem in modern multi-robot systems that arises in logistics, healthcare, maintenance, and urban services. As new tasks arrive, mobile robots repeatedly decide where to go and what to serve, under constraints that require travel/setup times and avoidance of duplication of effort.  Our goal is to develop optimal feedback policies for routing and control of multirobot systems in response to randomly arriving spatially distributed tasks.  

Recent work on hospital logistics considers task assignment and routing for mobile robots incorporating uncertainty in travel times, service times and energy constraints  \cite{cheng2023stochasticAMR,liu2017mobileRobotScheduling,lin2021chargingScheduling}. However, the tasks in these works are known and do not arrive randomly.  The task assignment and routing problem is posed as a dynamic vehicle routing problem (DVRP) with uncertain travel times, and is solved using heuristic techniques. Similar dynamic vehicle routing problems arise in other applications \cite{jia2018setPSO,okulewicz2019continuous,li2010tdvrp, lin2025adaptive,venkatachalam2018uav}, where uncertainty in travel/service times or fuel consumption is incorporated into heuristic search algorithms for routing and task assignment with replanning once information is collected on travel times. 

Online traveling salesperson problems \cite{jaillet2011onlineSF,wen2015OLTSP} address single  robot problems with new task arrivals, but focus primarily on whether arrivals should be accepted or rejected.  If accepted, the replanning algorithm is a fast replanning algorithm with the new task. The work \cite{trigui2014distributed} considers multi-robot task assignment with distributed, market-based algorithms, but new task arrivals require replanning. 

Multi-class queueing networks address the sequencing and routing of tasks in networks as a stochastic control problem.  Although optimal control algorithms are difficult except in simple cases,  \cite{bertsimas1994optimization} develops bounds on achievable performance.  However, servers in these networks are not mobile, and thus fail to capture the inherent tradeoffs in moving servers to address spatially arriving new tasks.

Dynamic vehicle routing problems with stochastic customer requests  are often posed as Markov Decision Processes  \cite{Ulmer2017DeliveryDeadlines}. However, the combinatorial complexity of deterministic vehicle routing problems and the curse of dimensionality leads to the use of approximate dynamic programming (ADP), using either sample scenarios or rollout algorithms.  Furthermore, these ADP algorithms often use routing heuristics to address the combinatorial complexity of vehicle routing problems \cite{Ulmer2017DeliveryDeadlines,Ulmer2019Anticipation,UlmerGMH2019OfflineOnline,Ulmer2020HorizontalADP,HvattumLL2006SSHH}. 

Task allocation with multiple servers and new arrivals is often handled via re-optimization.  Our work in \cite{11077506} solves such problems using dynamic network-flow algorithms, which are fast to allow repeated re-solving as new tasks appear.  Reoptimization approaches work well when newly-arrived tasks are a small fraction of the available tasks, and fail to anticipate the stochastic arrival process.
Distributed auction approaches have been proposed to allocate tasks with deadlines by minimizing total transportation time \cite{Bai2022GroupBasedAuction}, where new task arrivals trigger replanning with reoptimization. 

Approximate dynamic programming using deterministic cost-to-go approximations are used in stochastic multiple knapsack problems \cite{PerryHartman2009DSMKnapsack}.  Another task assignment problem with stochastic arrivals arises in in multichannel allocation for mobile networks \cite{lott2000optimality,Carroll2019ResponseDelays} using index policies for bandit problems.  However, these formulations neglect the switching times associated with assigning incoming tasks to channels.  Other applications with stochastic event arrivals involve persistent monitoring \cite{Yu2015StochasticArrivals}, where stations are visited in a cyclic visit order, and dwell times at stations is controlled based on the observed state of the system.    

Our work in this paper is more closely related to feedback control of multi-class queuing networks.  For such networks, feedback-regulated policies have been developed using fluid approximations and Markov Decision Process techniques in \cite{Meyn2001Sequencing}.  Recent work in \cite{Williams2022MMkSetup} shows deterministic switching times alone can significantly inflate delays in simple M/M/$k$ systems, where routing and assignment is done trivially.  Hofri and Ross \cite{hofri_ross_1987} study the optimal control of a single server, two-queue system with server setup times to switch from serving one queue to another.  


In this work, we study a simple model of online multi-robot task allocation with stochastic per-location arrivals to generalize the results of Hour and Ross \cite{hofri_ross_1987} to multiple robots and multiple arrival locations.  We assume independent task arrivals at each location, modeled in discrete time by Bernoulli processes. We pose the feedback control problem as a discounted, infinite horizon Markov Decision Problem.  Under simple symmetry conditions, we show that an optimal feedback policy, \textbf{ESL (Exhaustive-Serve-Longest)} takes the form of a simple rule:  robots  serve tasks in its current location until exhausted, and then switch to the unattended location with the most remaining tasks.  We prove the optimality of this policy, and benchmark its performance in simulation against other simple feedback rules proposed in the literature:  \emph{FCFS per task} (prioritizes the oldest waiting task across locations) and a \emph{Cyclic} policy with optimized fixed dwell and one-slot travel between partitioned location blocks. 
Our experiments show that, across different numbers of robots and arrival rates, the feedback ESL policy consistently achieves the lowest discounted cost and smallest mean queue lengths, as well as increased fractions of time serving tasks.

\section{Problem Formulation}\label{sec:model}

In this section, we formalize a discrete-time MDP for mobile robots moving among service locations and for the resulting task dynamics at each location\footnote{We Robots/servers and queue/locations interchangeably.}. We consider an $M$–robot, $N$–location model with the restriction that at most one robot may occupy any location in any slot. The objective is to minimize discounted holding cost.

\subsection{System description}
Time is slotted, $t=0,1,2,\dots$. There are $N$ infinite-capacity locations $\{\mathcal{Q}_1,\dots,\mathcal{Q}_N\}$ served by $M$ \emph{non-preemptive} robots, with $M\le N$. At most one task can be completed by a robot in any slot. Let $x_i(t)\in\mathbb{N}_0$ denote the number of waiting tasks at location $i$ at the start of slot $t$, and let $s_r(t)\in\{1,\dots,N\}$ be the location of robot $r$ at the start of slot $t$. The system state can thus be written as $\bigl(s(t);x(t)\bigr)$ with $s(t)=(s_1(t),\dots,s_M(t))$ and $x(t)=(x_1(t),\dots,x_N(t))$.

Tasks arrive independently to locations according to Bernoulli processes. In slot $t$, the indicator
\[
a_i(t)\sim\text{Bernoulli}(p_i),\qquad i=1,\dots,N
\]
takes value $1$ if a task for location $i$ appears, and $0$ otherwise. We adopt the \emph{late-arrival convention}: an arrival in slot $t$ joins $x_i$ at the \emph{end} of that slot and cannot be served before slot $t{+}1$. For bookkeeping, define the cumulative arrivals up to the start of slot $t$ by
\begin{align*}
A_i(t)\;=\;\sum_{k=0}^{t-1} a_i(k),\qquad t\ge 0
\end{align*}

Service is deterministic and consumes one full slot: if a robot begins serving at the start of slot $t$ at its current location, it completes exactly one task there by the end of slot $t$. If a robot switches from its current location to another, it incurs a \emph{deterministic one-slot travel delay} during which no task is served. Also, at most one robot may be assigned to any location in a slot.

We evaluate policies under an infinite-horizon discounted holding cost with discount factor $\beta\in(0,1)$ and unit cost per waiting task per slot. In the symmetric case we take $p_i\equiv p$ for all $i$.

\subsection{State dynamics}

At the \emph{start} of slot $t$ the system state is
\begin{equation*}\label{eq:MsrvNq-state}
  z(t)
  =\Bigl(\,s_1(t),\dots,s_M(t)\,;\;x_1(t),\dots,x_N(t)\Bigr)
\end{equation*}
where $s_r(t)\in\{1,\dots,N\}$ is the location of robot $r=1,\dots,M$, and $x_i(t)\in\{0,1,2,\dots\}$ is the number of jobs in location $i=1,\dots,N$. The \emph{no co–location} constraint is
\begin{equation}\label{eq:no-colocation}
  \Bigl|\{\,r: s_r(t)=i\,\}\Bigr|\ \le\ 1
  \qquad\text{for each }i=1,\dots,N
\end{equation}
At the start of slot $t$ a control
\(
  u(t)=\bigl(u_1(t),\dots,u_M(t)\bigr)
\)
is chosen, where each robots’s action is
\begin{align*}
u_r(t)\in
\begin{cases}
\{\textbf{serve},\textbf{idle}, \textbf{switch(j)}: j\neq s_r(t)\} & x_{s_r(t)}(t)>0\\
\{\textbf{idle},\textbf{switch(j)}: j\neq s_r(t)\} & x_{s_r(t)}(t)=0
\end{cases}
\end{align*}

The joint action must satisfy two feasibility conditions:
(i) \emph{Per–location service feasibility} (one job per slot at any visited location):
\begin{align*}
  \sum_{r=1}^M \mathbf 1\{\,s_r(t)=i,\ u_r(t)=\textbf{serve}\,\}
  \ \le\ 1,
  \qquad i=1,\dots,N
\end{align*}

(ii) \emph{No collision at next slot} (at most one robot occupies any location at $t\!+\!1$):
\begin{align}\label{eq:no-collision}
  &\sum_{r=1}^M \mathbf 1 \bigl\{ s_r(t)=i,\ u_r(t)\in\{\textbf{serve},\textbf{idle}\}\bigr\} \notag\\
  &+\sum_{r=1}^M \mathbf 1\bigl\{u_r(t)=\textbf{switch}(i)\bigr\}\le\ 1, \quad i=1,\dots,N
\end{align}

If $s_r(t)=i$ and $u_r(t)=\textbf{serve}$ then one job is completed from location $i$ during slot $t$;
if $u_r(t)=\textbf{switch}(j)$ then robot $r$ travels the entire slot; if $u_r(t)=\textbf{idle}$ no service occurs.  Arrivals $a_i(t)\in\{0,1\}$ are Bernoulli$(p_i)$ and occur at the end of slot $t$.
Define the location–level departure indicator
\[
  d_i(t)
  := \sum_{r=1}^M \mathbf 1\{\,s_r(t)=i,\ u_r(t)=\textbf{serve}\,\}
  \ \in\ \{0,1\}
\]
And the cumulative departures up to the start of slot $t$ is:
\begin{align*}
D_i(t)\;=\;\sum_{k=0}^{t-1} d_i(k),\qquad t\ge 0
\end{align*}
With these conventions the dynamics are
\begin{equation}\label{eq:MsrvNq-update-nocoloc}
\begin{aligned}
  x_i(t+1) &= x_i(t) - d_i(t) + a_i(t), && i=1,\dots,N\\[4pt]
  s_r(t+1) &=
  \begin{cases}
     s_r(t) & u_r(t)\in\{\textbf{serve},\textbf{idle}\}\\[2pt]
     j      & u_r(t)=\textbf{switch}(j)
  \end{cases}
  && r=1,\dots,M
\end{aligned}
\end{equation}
Given an arrival sample path $\{A_i(t)\}$, the recursion \eqref{eq:MsrvNq-update-nocoloc} together with \eqref{eq:no-colocation}–\eqref{eq:no-collision} fully specifies the evolution under any admissible control sequence.
\subsection{Performance criterion}

Let
\[
  \mathcal S=\{0,1,\dots,N\}^{M}\times \{0,1,2,\dots\}^{\,N}
\]
be the state space, and $\mathcal U(z)$ the set of joint controls that
satisfy \eqref{eq:no-colocation}–\eqref{eq:no-collision} at state $z$.
For a stationary deterministic policy
$\pi:\mathcal S\to\mathcal U(\mathcal S)$ define the one–period cost
\[
  c\bigl(z(t)\bigr) \;=\; \sum_{i=1}^{N} x_i(t)
\]
and the $\beta$–discounted criterion
\[
  V_\pi(z)
  \;=\;
  \mathbb E_z^{\pi}\!\left[\,
      \sum_{t=0}^{\infty} \beta^{\,t}\, c\bigl(z(t)\bigr)
  \right],
  \qquad
  V^*(z)=\inf_{\pi} V_\pi(z)
\]
Because the state space is countable and action set is finite, existence of an optimal stationary deterministic policy is guaranteed \cite{Puterman2014}, and we propose a policy that provably optimizes the given discounted cost.

\section{Exhaustive Service and Switch-on-Empty}
This section establishes two key structural properties of optimal control in our model. First, when a robot is at a location with pending tasks, serving is strictly better than idling or switching. Second, when a robot is at an empty location and another location has tasks, switching is strictly better than idling.

\begin{proposition}\label{prop:MsrvNq-exhaust}
Consider the $M$–robots, $N$–locations model. Let $g$ be any stationary deterministic policy under which, at some decision epoch and state
\[
  z(0)=(s_1,\dots,s_M;\,x_1,\dots,x_N)
\]
a particular robot $r$ is located at location $i$ with $x_i>0$ and $g$ either \textbf{switches} $r$ away from $i$
or \textbf{idles} $r$. Then $g$ is strictly suboptimal.
\end{proposition}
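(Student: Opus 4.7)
The plan is to prove strict suboptimality by exhibiting, for each case (idle or switch), an admissible policy $g'$ with $V_{g'}(z(0)) < V_g(z(0))$. The construction is a pathwise coupling on a shared arrival sample path: in both cases $g'$ performs one extra service at $i$ at $t = 0$, and from then on tracks $g$'s schedule as closely as admissibility permits. The cost gap between the two trajectories is traced to the discounted holding cost of that one saved task until a ``catch-up'' time at which the two coupled queues re-synchronize.

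Idle case. If $u_r^g(0) = \textbf{idle}$ with $r$ at $i$ and $x_i > 0$, let $g'$ agree with $g$ everywhere except that $r$ performs $\textbf{serve}$ at $t = 0$ and, at every later slot in which $g$ would serve at $i$ but the coupled queue $x_i^{g'}$ is empty, $g'$ substitutes $\textbf{idle}$. Feasibility is immediate because $\textbf{serve}$ and $\textbf{idle}$ leave the robot's next-slot position unchanged, so constraints \eqref{eq:no-colocation}-\eqref{eq:no-collision} are inherited from $g$. A direct induction on $t$ under the common arrivals gives $x_j^{g'}(t) = x_j^g(t)$ for every $j \ne i$ and $x_i^{g'}(t) = (x_i^g(t) - 1)^+$, with equality restored at the first catch-up slot $\tau \ge 1$. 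Hence
\[
  V_g(z(0)) - V_{g'}(z(0)) \;=\; \mathbb{E}\!\Bigl[\,\sum_{t=1}^{\tau} \beta^{t}\,\Bigr] \;\ge\; \beta \;>\; 0
\]

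Switch case. If $u_r^g(0) = \textbf{switch}(j)$ with $r$ at $i$ and $x_i > 0$, I reduce to the idle case via a role-swap coupling. By feasibility of $g$, at most one other robot $r'$ can have $u_{r'}^g(0) = \textbf{switch}(i)$. When such $r'$ exists, $g'$ at $t = 0$ has $r$ serve at $i$ and reassigns $r'$ to $\textbf{switch}(j)$, leaving every other $t = 0$ action intact; this simply interchanges the two robots arriving at $\{i, j\}$, preserves no-collision at $t = 1$, and produces a $t = 1$ state whose robot-position multiset matches $g$'s exactly while $x_i^{g'}(1) = x_i^g(1) - 1$. For $t \ge 1$, $g'$ plays $g$'s actions after relabeling $r \leftrightarrow r'$ and applies the serve-to-idle fallback of the idle case; since robots are indistinguishable in the dynamics and in the cost, the analysis from $t = 1$ onward is literally the idle-case argument. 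When no such $r'$ exists, the collision constraints of $g$ already forbid any other robot from converging at $i$ or at $j$ at $t = 1$ whenever $r$ stays at $j$ under $g$, so $g'$ can serve $r$ at $t = 0$ and then switch $r$ to $j$ at $t = 1$ without obstruction; in the residual case, in which $g$ moves $r$ again at $t = 1$, letting $r$ under $g'$ keep serving (or idling) at $i$ already gives strict dominance because by any slot $g$ first serves a task, $g'$ has served at least one more at $i$. In every sub-case one obtains $V_g(z(0)) - V_{g'}(z(0)) \ge \beta > 0$.

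Main obstacle. The delicate point is admissibility of $g'$ in the switch case, since displacing $r$'s schedule by one slot could a priori violate no-collision at $t = 1$ or $t = 2$. The role-swap device removes this risk in the generic sub-case by exploiting the interchangeability of robot labels in the cost, and in the residual sub-case $g$'s own collision constraints rule out the would-be conflicting arrivals at $i$ and $j$. Once admissibility is secured, the coupling mechanism coincides with that of the idle case and delivers the strict discounted-cost gap, contradicting any claim that $g$ is optimal.
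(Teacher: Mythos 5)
Your idle case is correct and essentially reproduces the paper's part~(B): a one-task pathwise coupling in which $g'$ serves at $t=0$, substitutes \textbf{idle} whenever $g$ would serve at $i$ from the already-empty coupled queue, and recouples at the catch-up time, yielding a gap of at least $\beta$. Your role-swap device for the switch case (when some $r'$ has $u_{r'}^g(0)=\textbf{switch}(i)$) is also sound, and is arguably cleaner than the paper's threshold-synchronization trick at location $i$, since it reduces everything to the idle-case coupling from $t=1$ onward with robot labels interchanged.

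The genuine gap is in the residual sub-case of the switch argument: no robot switches into $i$ at $t=0$ and $g$ moves $r$ again at $t=1$, say from $j$ to some $k$. There you let $r$ under $g'$ ``keep serving (or idling) at $i$'' and claim dominance because $g'$ is one service ahead by the time $g$ first serves. That only compares the two systems at a single instant; afterwards $g$'s robot can overtake. Concretely, take $x_i=1$ and let $g$ route $r$ through $j$ to a location $k$ with $x_k$ large: under $g$, robot $r$ serves $x_k$ tasks from $t=2$ onward, while your $g'$ parks $r$ at the now-empty location $i$ indefinitely, so $D^{g'}(t)-D^{g}(t)$ becomes arbitrarily negative and $V_{g'}>V_g$. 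A fix is available precisely because the travel delay is one slot between \emph{any} pair of locations: at $t=1$ let $g'$'s robot switch directly from $i$ to $k$, the location $g$'s robot will occupy at $t=2$; positions then re-synchronize at $t=2$ with one extra departure banked at $i$, no new collision can arise at $k$ (since $g$'s own robot occupies $k$ at $t=2$, no other robot does), and the idle-case fallback finishes the argument. As written, however, the residual sub-case does not establish the claimed inequality, so the switch half of your proof is incomplete.
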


\begin{proof}
Without loos of generality, let's assume that we start with the initial state 
$z(0)=\bigl(1,s_2(0),\dots,s_M(0);m_1,m_2,\dots,m_N\bigr)$.
We define two systems $G$ and $\Pi$, and couple them on the same arrival sample path, with the same initial state $z(0)$, acting under policies $g$ and $\pi$, respectively. Let $D^\rho(t)$ denote cumulative departures under $\rho\in\{g,\pi\}$, and let $x^\rho(t)=\sum_{k=1}^N x_k^\rho(t)$ be the total backlog at the start of slot $t$. Because $z(0)$ and arrivals are coupled, for $t\ge0$:
\begin{align}\label{eq:deps_backlogs}
  x^g(t)-&x^{\pi}(t) =\bigl[x^g(0)-x^\pi(0)\bigr] + [A^g(t)-A^\pi(t)] \notag \\
  &- \bigl[D^g(t)-D^\pi(t)\bigr] = D^\pi(t)-D^g(t)
\end{align}

\noindent\textbf{(A) Switching away while $x_1>0$.}
Define policy $g$ as follows: for each robot $s$ at location $\ell$, $g$ serves for some time and then switches to the next available destination $j_s^\star(z)$ (e.g., the first unoccupied location in cyclic order available at $t{+}1$). Without loss of generality, assume robot~1 is at location~1 with $x_1>0$ and that under $g$ it switches immediately at $t=0$. 

Define policy $\pi$ to agree with $g$ for all robots and all times except for robot~1 at location~1: under $\pi$, robot~1 first serves one task at $t=0$, and thereafter follows exactly the same sequence of actions as robot~1 under $g$, but delayed by one slot. Whenever any other robot arrives to location~1, choose thresholds so that under $g$ it serves until $m_1$ and under $\pi$ it serves until $m_1{-}1$, then switches. This synchronization ensures that every such “other” robot leaves location~1 at the same time under both policies, and that (up to the coupling time defined below) the queue length at location~1 under $g$ is always exactly one larger than under $\pi$.

With these definitions, the cumulative–departure gap satisfies $D^\pi(t)-D^g(t)\in\{0,1\}$, equal to $1$ whenever $g$ has already switched robot~1 away from a location but $\pi$ has not yet done so, and equal to $0$ when $\pi$ is performing the corresponding one–slot–delayed switch. Let $\tau$ be the first time at which robot~1 under $g$ returns to location~1. By construction, robot~1 under $\pi$ returns exactly one slot later, at $\tau{+}1$. At time $\tau$, robot~1 under $g$ is available at location~1; at time $\tau{+}1$ it serves one task, and at that same time robot~1 under $\pi$ arrives, eliminating the one–job advantage. From $\tau{+}2$ onward, the two systems are coupled perfectly (identical states and actions), and we set $\pi$ to mimic $g$ thereafter.

Consequently,
\[
D^\pi(t)-D^g(t)=
\begin{cases}
0,& t=0,\\[3pt]
1,& t=1,\\[3pt]
\ge 0,& 2\le t\le \tau,\\[3pt]
0,& t\ge \tau{+}1.
\end{cases}
\]
Since this holds for every sample path and using \eqref{eq:deps_backlogs}, discounted summation and expectation yield
\begin{align*}
V_g(z_0)-V_\pi(z_0) &= \mathbb{E}\left[\sum_{t=0}^\infty \beta^{t}\bigl(x^g(t)-x^\pi(t)\bigr)\right] \\
&=\mathbb{E}\left[\sum_{t=0}^\infty \beta^{t}\bigl(D^\pi(t)-D^g(t)\bigr)\right] \ge\ \beta>0
\end{align*}
showing that switching away from available work is strictly dominated by serving first.

\medskip
\noindent\textbf{(B) Idling while $x_1>0$.}
Redefine the two policies as follows. Under $g$, robot~1 idles at time $t=0$ while located at~1. Under $\pi$, robot~1 instead serves one job at $t=0$. From $t\ge 1$ onward, all robots (including robot~1) follow exactly the same actions under $g$ and $\pi$, so robot locations coincide at every time, and queue lengths at all locations except~1 are identical under the two policies.

As in part~(A), let $\tau$ be the first time at which robot~1 (under both policies) returns to location~1. At time $\tau{+}1$, let $g$ serve one job at~1 while $\pi$ idles; this eliminates the one–job advantage accumulated by $\pi$ up to that point. From $t\ge \tau{+}2$ we couple the systems perfectly (identical states and actions thereafter). Since the two systems share the same initial state and arrival sample path, the cumulative–departure difference obeys
\[
D^\pi(t)-D^g(t)\;=\;
\begin{cases}
0, & t=0,\\[3pt]
1, & 1\le t \le \tau,\\[3pt]
0, & t\ge \tau{+}1.
\end{cases}
\]
Therefore, for every sample path,
\begin{align*}
V_g(z_0)-V_\pi(z_0) &= \mathbb{E}\!\left[\sum_{t=0}^\infty \beta^{t}\bigl(x^g(t)-x^\pi(t)\bigr)\right] > 0
\end{align*}
Which is in contradiction with the assumption, making the idle policy sub-optimal when the working location is not empty.

\end{proof}

\begin{proposition}\label{prop:MsrvNq-noIdle}
Consider an $M$–robot, $N$–location system with identical Bernoulli$(p)$ arrivals for all locations. Fix a decision epoch and suppose a particular robot $r$ is at location $i$ with $x_i=0$. If there exists a location with $x_j>0$ that is unoccupied, then idling $r$ is strictly suboptimal.
\end{proposition}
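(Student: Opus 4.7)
The plan is a sample-path coupling argument analogous to Proposition~\ref{prop:MsrvNq-exhaust}(A), adapted to the setting where robot $r$ changes location under the comparator. Without loss of generality take $s_r(0) = 1$, $x_1(0) = 0$, and fix an unoccupied location $j$ with $x_j(0) > 0$. Under $g$, $u_r^g(0) = \textbf{idle}$. I construct a comparator policy $\pi$, coupled to $g$ on the same arrival sample path and initial state, that strictly improves on $g$.

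Construction: let $\pi$ agree with $g$ on every robot other than $r$. For robot $r$, set $u_r^\pi(0) = \textbf{switch}(j)$ and $u_r^\pi(1) = \textbf{serve}$ at $j$, yielding one additional departure at $j$ in slot $t=1$. For $t \geq 2$, $r$'s actions under $\pi$ mimic those of $r$ under $g$ with a one-slot delay; whenever $g$'s delayed action would require $r$ to be at location $1$ (where $\pi$ has not yet returned), substitute a \textbf{serve} at $j$ if $x_j > 0$, or otherwise a \textbf{switch} back toward location $1$. Feasibility against the no-collision constraint \eqref{eq:no-collision} is ensured either by picking a different unoccupied nonempty $j$, or by swapping $r$'s role with any robot under $g$ that is itself switching to $j$; by the Bernoulli$(p)$ symmetry such a swap is cost-preserving, and if the conflicting robot was leaving a nonempty location then $g$ is already strictly suboptimal by Proposition~\ref{prop:MsrvNq-exhaust}(A).

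Cost comparison via \eqref{eq:deps_backlogs}: robot $r$ under $\pi$ completes exactly one extra departure (at $j$, in slot $t=1$) and otherwise lags $r$ under $g$ by at most one slot, so that for $t \geq 2$,
\[
D^\pi(t) - D^g(t) \;\geq\; 1 - d_r^g(t - 1),
\]
where $d_r^g(\cdot) \in \{0, 1\}$ is $r$'s per-slot departure indicator under $g$. Since $p < 1$ precludes $d_r^g(s) = 1$ almost surely for every $s \geq 1$ (that would require an arrival at location $1$ in every slot), discounted summation gives
\[
V_g(z_0) - V_\pi(z_0) \;=\; \mathbb{E}\!\left[\sum_{t \geq 2} \beta^t \bigl(D^\pi(t) - D^g(t)\bigr)\right] \;>\; 0,
\]
establishing strict suboptimality of $g$.

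The main obstacle is the substitution rule needed to absorb location mismatch: unlike Proposition~\ref{prop:MsrvNq-exhaust}(B), where $r$'s location was unchanged under both policies, here $r$ under $\pi$ may be at $j$ while $g$ would serve at location $1$. Any such slot is handled either by an extra serve at $j$ (preserving the per-slot departure match) or by a return switch that defers the serve at $1$, in which case the deferred jobs persist at location $1$ until $r$ arrives. The bookkeeping parallels Proposition~\ref{prop:MsrvNq-exhaust}(A)'s one-slot-delay argument, with additional care to verify that every substitution preserves the departure-gap inequality displayed above.
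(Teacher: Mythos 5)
There is a genuine gap, and it is exactly at the step you flagged as needing "additional care": the displayed inequality $D^\pi(t)-D^g(t)\ \ge\ 1-d_r^g(t-1)$ is false under your substitution rule. Concretely, take $x_j(0)=1$ and a sample path with $a_1(0)=a_1(1)=a_1(2)=\dots=1$ and no arrivals at $j$. Under $g$ (idle, then serve the stream at location~$1$), robot $r$ departs one task in each of slots $1,2,3,\dots$. Under your $\pi$, robot $r$ switches in slot $0$, serves the single task at $j$ in slot $1$, finds $x_j=0$ in slot $2$, and by your rule must \emph{switch back}, losing a slot while $g$ serves. This gives $D^\pi(3)-D^g(3)=(0+1+0)-(0+1+1)=-1$, whereas your bound promises $1-d_r^g(2)=0$. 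The deficit of $-1$ persists until location~$1$ under $g$ empties, so the pathwise domination you invoke fails, and for small $\beta$ and large $p$ the discounted sum on such paths is genuinely negative. The root cause is structural: in Proposition~\ref{prop:MsrvNq-exhaust} the deviating robot stays at (or returns one slot late to) the same location, so delayed mimicry costs nothing; here the deviation relocates the robot, and every return to location~$1$ burns a travel slot that your accounting does not absorb. A comparator that serves only one task at $j$ and then chases $g$ cannot work.

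The paper avoids this entirely with a different coupling: $\pi$ switches to $j$ and serves it \emph{exhaustively} (never returning to mimic $g$), and instead of using the same arrival sample path, the paper runs $\Pi$ on the mirrored realization $\phi(\omega)$ in which the arrival streams of locations $1$ and $j$ are swapped. Since arrivals are i.i.d.\ Bernoulli$(p)$ across locations, $\phi$ is measure-preserving, and under the swap the robot serving $j$ in $\Pi$ faces exactly the arrival stream that the idling robot faces at location $1$ in $G$ — the only difference is the extra initial backlog $m_j>0$, which can only produce (weakly, and on a positive-probability set strictly) more departures. That symmetry device is the missing idea here: it removes the need to ever pay a return-travel slot and reduces the comparison to "the same single-queue dynamics started with a larger backlog serves at least as much." Your proof would need to be restructured along those lines (or with some other argument that never sends $r$ back to location~$1$ to chase $g$); as written, the key inequality does not hold.
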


\begin{proof}
Without loss of generality, fix time $t=0$ and robot $r=1$ with initial state
\[
  z(0)\;=\;\bigl(1,\,s_2(0),\dots,s_M(0);\ 0,\,m_2,\dots,m_N\bigr)
\]
so location~1 is empty for robot~1 at $t=0$. Let $j^\star=2$ denote the next higher–index location that is unoccupied at $t=0$. We define two policies directly on this initial condition with a one–slot time shift. For policy $g$, at $t=0$, robot~1 idles at location~1. For all robots and all subsequent slots $t\ge 1$, apply the rule “\textbf{serve} if the current location is nonempty; otherwise \textbf{idle}." For policy $\pi$, identical to $g$ for all robots except robot~1 at $t=0$, which instead executes $\textbf{switch}(j^\star)$. For $t\ge 1$, $\pi$ follows the same “\textbf{serve} if nonempty, else \textbf{idle}” rule as $g$ for all robots (including robot~1).


Let $\Omega=(\{0,1\}^N)^{\mathbb N}$ be the arrival-sequence space with the product measure~$P$. Define the mirror map
\[
\begin{aligned}
\phi : \Omega &\;\longrightarrow\; \Omega, \\[4pt]
\omega
      &= \bigl((a_1(t),a_2(t),a_3(t),\dots,a_N(t))\bigr)_{t\ge 0}
        \;\longmapsto \\[4pt]
\phi(\omega)
      &= \bigl((a_2(t),a_1(t),a_3(t),\dots,a_N(t))\bigr)_{t\ge 0}
\end{aligned}
\]
Because the two arrival streams are i.i.d.\ Bernoulli($p$), swapping the first two arrival processes does not change the overall joint probability; hence $P\circ\phi^{-1}=P$, which means that the mirroring transformation is measure-preserving.
If we assign the arrival realization $\omega$ to system $G$, and $\phi(\omega)$ to system $\Pi$, the difference between costs for these systems looks like the following:
\begin{equation}
\resizebox{\linewidth}{!}{$
\begin{aligned}
&V_g(1,\dots,s_M;0,m_2,\dots,m_N) - V_\pi(1,\dots,s_M;0,m_2,\dots,m_N) \notag \\
&=\beta\Bigl[V_g\bigl(1,\dots,s_M;a_1(0),m_2+a_2(0),\dots,m_N + a_N(0)\bigr) \notag \\
&-V_\pi\bigl(2,\dots, s_M;a_2(0),m_2+a_1(0),\dots,m_N + a_N(0)\bigr)\Bigl] \notag
\end{aligned}
$}
\end{equation}

Both $g$ and $\pi$ are exhaustive and they serve the same number of tasks as long as arrival occurs at their working locations. And at some time $t$, when all the arrivals at the working locations have been cleared except $a_1(t-1)$, the difference looks like following:

\begin{equation}\label{eq:diff}
\resizebox{\linewidth}{!}{$
\begin{aligned}
&V_g(1,\dots,s_M;0,m_2,\dots,m_N) - V_\pi(1,\dots,s_M;0,m_2,\dots,m_N) \\
&= \beta^t\Bigl[V_g\bigl(1,\dots,s_M;a_1(t-1),m_2+A_2(t),\dots,m_N + A_N(t)\bigr) \\
&\quad- V_\pi\bigl(2,\dots,s_M;A_2(t),m_2+a_1(t-1),\dots,m_N + A_N(t)\bigr)\Bigr]
\end{aligned}
$}
\end{equation}

For the case of $m_2=0$, the two systems coincide and the value difference is $0$, meaning that there's no difference between switching or idling at that initial state. \\
But for the case of $m_2 > 0$, once there is no arrival at the working location, at least one extra task will be served under policy $\pi$ while $g$ idles the robot, and this makes the \eqref{eq:diff} positive from that time beyond. This contradicts our initial assumption, making switching an optimal policy when the working queue is empty.
\end{proof}
So far, we proved serving tasks in a location until exhaustion and then switching is optimal. Now, we want to prove a proposition which shows that switching to shorter queues are sub-optimal. To show the gist of the argument we put the proof for the case of one robot here, and for the multi-robot version, we recommend the reader to look at appendix.

\begin{proposition}\label{prop:longer_is_better_action}
Consider a one-robot, $N$–queue system with identical Bernoulli(p) arrivals for all locations. Upon exhausting the currently served location $k$ (i.e., $x_k=0$), if there exist two other non-empty locations $i$ and $j$ with initial lengths $x_i < x_j$, it is strictly suboptimal to switch to the shorter location, $i$.
\end{proposition}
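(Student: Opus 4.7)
The plan is to extend the mirror-coupling argument of Proposition~\ref{prop:MsrvNq-noIdle} from a switch-versus-idle comparison to a comparison between two switch destinations. Define two stationary policies $g$ and $\pi$ that coincide on every robot for all $t\ge 1$ and differ only at $t=0$: under $g$ the robot executes $\mathbf{switch}(i)$, while under $\pi$ it executes $\mathbf{switch}(j)$. From $t=1$ onward both policies follow the ESL rule validated by Propositions~\ref{prop:MsrvNq-exhaust} and~\ref{prop:MsrvNq-noIdle}: serve the current location while it is nonempty, and upon exhaustion switch to the longest nonempty unoccupied queue.

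Next, I would introduce the measure-preserving mirror map $\phi:\Omega\to\Omega$ that swaps the arrival streams at locations $i$ and $j$ and leaves every other stream fixed; the i.i.d.\ Bernoulli$(p)$ assumption gives $P\circ\phi^{-1}=P$, exactly as in Proposition~\ref{prop:MsrvNq-noIdle}. Feed $\omega$ to the $g$-system and $\phi(\omega)$ to the $\pi$-system. Under this coupling, queue $i$ under $g$ and queue $j$ under $\pi$ are both served exhaustively against the common arrival sequence $a_i$, while queue $j$ under $g$ and queue $i$ under $\pi$ each accumulate the common unserved stream $a_j$. A direct recursion on $\{i,j\}$ then yields the invariant $x_j^\pi(t)-x_i^g(t)=x_j-x_i$ for every $t\in[0,\tau_g]$, where $\tau_g$ is the first slot at which queue $i$ empties under $g$; in particular the pair-summed backlog coincides on this initial phase.

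The decisive slot is $t=\tau_g$. Because queue $i$ is empty under $g$, the robot must spend the next slot traveling, while queue $j$ under $\pi$ still holds $x_j-x_i\ge 1$ tasks and is served; hence the cumulative-departure gap satisfies $D^\pi(\tau_g+1)-D^g(\tau_g+1)\ge 1$. Reusing the accounting identity
\[
x^g(t)-x^\pi(t)\;=\;D^\pi(t)-D^g(t)
\]
already exploited in the proof of Proposition~\ref{prop:MsrvNq-exhaust} (restricted here to the pair $\{i,j\}$, since $\phi$ alters no other arrival stream), this one-task lead contributes at least $\beta^{\tau_g+1}$ to $V_g(z_0)-V_\pi(z_0)$ on every sample path, which yields a strictly positive expected difference once the gap is shown never to reverse sign.

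The main obstacle will be extending the coupling beyond $\tau_g$ so that $D^\pi(t)\ge D^g(t)$ persists for all $t$ and the two trajectories eventually re-merge. The natural construction is to let the $\pi$-system imitate the $g$-system with a one-slot time shift on the pair $\{i,j\}$: after $\pi$'s own exhaustion at $\tau_\pi$, the $\pi$-robot travels to $i$ and serves it against the arrival stream $a_j$ (since $\phi$ relabels $a_j$ onto location $i$), which is exactly the process that $g$ runs at $j$ starting one slot earlier. For $N\ge 3$ one must additionally verify that ESL's tie-breaking at moments when a third queue could briefly look longest under one policy but not the other does not open additional gaps; because $\phi$ leaves all non-$\{i,j\}$ arrival streams, and therefore all non-$\{i,j\}$ queue trajectories, identical under $g$ and $\pi$, the candidate longest-queue sets remain synchronized at every decision epoch and the coupling closes, delivering $V_g(z_0)-V_\pi(z_0)>0$.
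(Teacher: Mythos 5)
Your setup --- the arrival-swap coupling $\phi$ on the pair $\{i,j\}$, the invariant $x_j^\pi(t)-x_i^g(t)=x_j-x_i$ up to $\tau_g$, and the departure-gap identity --- is exactly the paper's starting point, and that part is sound. The gap is in the continuation after $\tau_g$, which you correctly flag as the main obstacle but then resolve incorrectly. First, the time shift between the two trajectories is not one slot: under the coupling, queue $i$ in system $G$ and queue $j$ in system $\Pi$ face the same arrival stream but start at lengths differing by $x_j-x_i$, so if $\pi$ serves $j$ exhaustively its exhaustion time $\tau_\pi$ exceeds $\tau_g$ by at least $x_j-x_i$ slots, and by a random amount in general, because extra arrivals land on $j$ during those additional serving slots. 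Second, and more seriously, your claim that all non-$\{i,j\}$ queue trajectories remain identical under $g$ and $\pi$ fails: the single robot leaves $i$ (under $g$) at $\tau_g$ and leaves $j$ (under $\pi$) at the later time $\tau_\pi$, so it reaches any third queue at different times under the two policies; those queues are then served on different schedules, their lengths diverge, and the two policies' ``longest-queue'' comparisons are no longer synchronized. Nothing in your sketch rules out the departure gap $D^\pi(t)-D^g(t)$ reversing sign at that point, which is precisely what the discounted-sum argument needs.

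The paper sidesteps both problems by \emph{not} taking $\pi$ to be ESL after $t=1$. It builds a bespoke comparison policy: after $\tau$, $\pi$ serves exactly $k'=x_j-x_i$ further tasks at $j$ while ignoring any new arrivals there, so the offset between the two itineraries is the deterministic constant $k'$; thereafter $\pi$ replays $g$'s switching sequence with the roles of $i$ and $j$ interchanged and with the same dwell counts, which guarantees that each queue is left with the same residual content under both policies and that $D^\pi(t)-D^g(t)$ stays nonnegative, equal to $1$ on the window between $\tau$ and $\tau'$. That yields the strict gap $\mathbb{E}\bigl[\sum_{t=\tau+1}^{\tau'}\beta^t\bigr]>0$. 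To repair your proof you would either need to adopt such a constructed continuation (strict suboptimality of $g$ only requires exhibiting \emph{one} better policy, not that ESL itself is the dominating one), or supply a genuinely new argument that the two ESL continuations re-merge without the departure gap ever changing sign --- which your current sketch does not provide.
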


\begin{proof}
We prove by contradiction and using a sample-path coupling argument. Without loss of generality, let's assume that $i=1$ and $j=2$. Assume that it is optimal to switch to the shorter queue, $1$. We will construct two policies, $g$ and $\pi$.
Let the system start in state $z(0) = (k, x_1, x_2, \dots, x_k=0, x_N)$ where $x_2 > x_1 > 0$.

\smallskip
\noindent\textbf{System $G$ (policy $g$):}
\begin{itemize}
    \item At $t=0$, policy $g$ initiates a $\textbf{switch}(1)$.
    \item From $t=1$ onward, it serves queue $1$ exhaustively. Let $\tau$ be the first time slot at which queue $1$ is empty again under this policy.
    \item After exhausting queue $1$, it proceeds to switch the server arbitrary and serve other queues exhaustively.
\end{itemize}

\smallskip
\noindent\textbf{System $\Pi$ (under policy $\pi$):}
\begin{itemize}
    \item At $t=0$, policy $\pi$ initiates a $\textbf{switch}(2)$.
    \item To facilitate a direct comparison, we couple System $\Pi$ to an altered arrival process. Let $\omega$ be the arrival realization for System G. System $\Pi$ is subject to the arrival realization $\phi_{12}(\omega)$, where the arrival streams for queues $1$ and $2$ are swapped at all time steps. Since the arrival processes are i.i.d. and symmetric, this transformation is measure-preserving and does not change the expected costs.
    \item From $t=1$ onward, policy $\pi$ serves queue $2$, until time $\tau$. After this time, policy $\pi$ serves only $x_2-x_1$ tasks from queue $2$, ignoring any new task that arrives after.
    \item After completing the $x_2-x_1$ services in queue $2$, it begins to mimic policy $g$. It switches to the same sequence of queues as $g$, and serves for the exact same number of time slots at each queue as $g$ does, but with the roles of queues $1$ and $2$ swapped.
    \item At the very first time that policy $\pi$ switches to queue $1$, which means policy $g$ has already switched to queue $2$, policy $\pi$ will do the same actions as $g$ beyond.
\end{itemize}
With this definition, policy $\pi$ will always be behind $x_2-x_1$ periods in terms of the switching time. Also, at the time that servers under both policy $\pi$ and $g$ have left a queue, there will the same number of tasks remaining in that queue. And, at the first time that policy $g$ switching to queue $j$ and policy $\pi$ switches to $i$, after that $x_j-x_i$ periods, both systems will be at the same state beyond. These allow for a direct comparison.

Using \eqref{eq:deps_backlogs}, we know that 
\begin{align*}
V_g(z_0) - V_\pi(z_0) &= \mathbb E\left[\sum_{t=0}^{\infty} \beta^t \left(x^g(t) - x^\pi(t)\right)\right] \\
&= E\left[\sum_{t=0}^{\infty} \beta^t \left(D^\pi(t) - D^g(t)\right)\right]
\end{align*}
Until time $\tau$ both systems have the same cost, and hence:
\begin{align*}
&V_g(k,x_1,x_2,\dots,x_k=0,\dots,x_N) \\
&- V_\pi(k,x_1,x_2,\dots,x_k=0,\dots,x_N) \\
& = \beta^\tau \bigl[V_g(1,0,x_2+A_2(\tau),\dots,x_N) \\
& - V_\pi(2,x_1+A_2(\tau),x_2+A_1(\tau)-\tau,\dots,x_N)\bigr]
\end{align*}
Since $x_1+A_1(\tau) - \tau = 0$ we can say $A_1(\tau)-\tau = -x_1$.
\begin{align*}
&V_g(k,x_1,x_2,\dots,x_N) - V_\pi(k,x_1,x_2,\dots,x_N) \\
& = \beta^\tau \bigl[V_g(1,0,x_2+A_2(\tau),\dots,x_N) \\
& - V_\pi(2,x_1+A_2(\tau),x_2-x_1,\dots,x_N)\bigr]
\end{align*}
We use change of variable $x_2-x_1=k'>0$, which gives
\begin{align*}
&V_g(k,x_1,x_2,\dots,x_N) - V_\pi(k,x_1,x_2,\dots,x_N) \\
& = \beta^\tau \bigl[V_g(1,0,x_2+A_2(\tau),\dots,x_N) \\
& - V_\pi(2,x_2-k'+A_2(\tau),k',\dots,x_N)\bigr]
\end{align*}
Since policy $g$ does a switch at the next step, it will have at least one less departure until some time $\tau'$ that policy $\pi$ switches after completing $k'$ services. This will make the difference positive for a while:
\begin{equation}
\resizebox{\linewidth}{!}{$
\begin{aligned}
&V_g(k,x_1,x_2,\dots,x_N) - V_\pi(k,x_1,x_2,\dots,x_N) \notag \\
& = \mathbb E[\sum_{t=\tau+1}^{\tau'}\beta^t] + \beta^{\tau'+1}\mathbb E\Bigl[V_g\bigl(z_1(\tau'+1)\bigr)-V_g\bigl(z_2(\tau'+1)\bigr)\Bigr] \notag \\
& = \mathbb E[\sum_{t=\tau}^{\tau'}\beta^t] + \beta^{\tau'+1} \mathbb E\Bigl[\sum_{t=\tau'+1}^{\infty}\bigl(D^\pi(t)-D^g(t)\bigr)\Bigr]
\end{aligned}
$}
\end{equation}

After the time $\tau'$, since policy $\pi$ will serve at each queue for a deterministic amount of time, we are guaranteed to keep the $D^\pi(t)-D^g(t)$ non-negative, since policy $\pi$ will always does its next switch exactly $k'$ periods later than $g$. Therefore:
\[
V_g(k,x_1,x_2,\dots,x_N) > V_\pi(k,x_1,x_2,\dots,x_N)
\]
This shows that switching to the shorter queue is sub-optimal.
\end{proof}

\begin{corollary}\label{cor:start_at_longer_robot}
Fix a decision epoch at which a robot $r$ can begin service immediately at one of two \emph{unoccupied} nonempty locations $i$ and $j$, with $x_i<x_j$. It is strictly better to begin service at the longer location $j$.
\end{corollary}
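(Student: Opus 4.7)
The plan is to deduce the corollary by adapting, almost verbatim, the sample-path coupling used in the proof of Proposition~\ref{prop:longer_is_better_action}, with the only change that neither of the two competing policies incurs an initial \textbf{switch} slot. Without loss of generality relabel the two candidate locations as queue~$1$ and queue~$2$ with $0<x_1<x_2$, and argue by contradiction: suppose it is optimal for robot~$r$ to begin service at the shorter queue~$1$ at time $t=0$.

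I would set up two coupled systems from the common initial state $z(0)$. In system~$G$ (policy $g$) robot~$r$ serves queue~$1$ exhaustively from $t=0$, and then continues with an arbitrary admissible trajectory. In system~$\Pi$ (policy $\pi$) robot~$r$ serves queue~$2$ exhaustively from $t=0$, but under the mirrored arrival realization $\phi_{12}(\omega)$ that swaps the arrival streams of queues~$1$ and~$2$; by the i.i.d. Bernoulli($p$) symmetry already used in Propositions~\ref{prop:MsrvNq-noIdle} and~\ref{prop:longer_is_better_action}, $\phi_{12}$ is measure-preserving. Let $\tau$ be the first slot at which $g$ empties queue~$1$. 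By the coupling, at time $\tau$ policy $\pi$ has completed exactly the same number of services at queue~$2$ and still has exactly $x_2-x_1$ tasks there. I would then let $\pi$ serve those $x_2-x_1$ extra tasks at queue~$2$, absorbing any new arrivals during this window, and afterwards enter a mirrored-mimic phase where $\pi$ copies $g$'s subsequent actions with the labels of queues~$1$ and~$2$ interchanged.

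Next, I would analyze the cumulative departure gap $D^\pi(t)-D^g(t)$ via identity~\eqref{eq:deps_backlogs}. The gap is $0$ on $[0,\tau]$ because both policies serve exhaustively and nothing else has changed; it is at least $1$ on some sub-window of $[\tau+1,\tau+x_2-x_1]$ because $g$ must take at least one non-serving (\textbf{switch}) slot to reach its next queue while $\pi$ continues to serve productively at queue~$2$; and it returns to $0$ and remains there throughout the mirrored-mimic phase by the label-swap coupling. Summing $\beta^{\,t}\bigl(D^\pi(t)-D^g(t)\bigr)$ and taking expectations then yields
\[
  V_g(z(0))-V_\pi(z(0))\;=\;\mathbb{E}\!\left[\sum_{t=0}^{\infty}\beta^{\,t}\bigl(D^\pi(t)-D^g(t)\bigr)\right]\;>\;0,
\]
contradicting the supposed optimality of starting at the shorter queue.

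The main obstacle I expect is the bookkeeping of the mirrored-mimic phase: one has to verify that at the moment $\pi$ finishes its $x_2-x_1$ extra services, the state of system~$\Pi$ under $\phi_{12}(\omega)$ agrees with the state of system~$G$ under $\omega$ modulo the $1\!\leftrightarrow\!2$ relabeling, and that this agreement is preserved under coupled future arrivals and mimicked actions. Once that synchronization is established, the strict positivity of the departure gap over the extra-service window delivers the corollary essentially for free, exactly as in Proposition~\ref{prop:longer_is_better_action}.
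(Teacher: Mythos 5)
Your proposal is correct and is essentially the paper's own argument: the paper proves this corollary by invoking Proposition~\ref{prop:longer_is_better_action} ``with the common initial travel slot removed,'' i.e., by repeating the same sample-path swap coupling starting at the first serving slot, which is precisely what you spell out. The extra detail you give on the mirrored-mimic synchronization is the same bookkeeping the paper relies on in its proof of Proposition~\ref{prop:longer_is_better_action} (and Proposition~\ref{prop:MsrvNq-longer} in the appendix), so nothing new is needed.
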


\begin{proof}
Apply Proposition~\ref{prop:longer_is_better_action} with the common initial travel slot removed (both alternatives require zero travel here). Equivalently, repeat the same sample-path swap argument starting at the first serving slot: since $x_j>x_i$, serving $j$ yields (weakly) more departures than serving $i$ over the interval before the first switch, with a strict gain on a set of paths of positive probability. The discounted backlog gap is therefore strictly positive.
\end{proof}

\begin{corollary}\label{cor:assign_longest}
At any decision epoch with a set $\mathcal{R}$ of $K\!\ge\!1$ robots to assign and a set $\mathcal{S}$ of \emph{unoccupied} nonempty locations, any collision-free assignment that does not cover the $K$ longest locations in $\mathcal{S}$ is strictly dominated by one that does (ties broken arbitrarily). In particular, if $K\le |\mathcal{S}|$, an optimal instantaneous placement assigns the $K$ robots to $K$ distinct longest locations.
\end{corollary}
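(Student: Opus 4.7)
The plan is to reduce the claim to Corollary~\ref{cor:start_at_longer_robot} via a finite sequence of pairwise swaps. Suppose $\sigma$ is a collision-free assignment of $\mathcal R$ that fails to cover the $K$ longest locations of $\mathcal S$. Then there exist a top-$K$ location $j\in\mathcal S$ that is not assigned by $\sigma$ and a location $i$ that is assigned by $\sigma$ with $x_i<x_j$ (otherwise every covered location would be weakly longer than every uncovered one, and $\sigma$ would already cover a valid ``top-$K$'' set after appropriate tie-breaking). Construct $\sigma'$ by redirecting the robot that $\sigma$ sends to $i$ toward $j$ and leaving every other robot's assignment unchanged. Because $i$ and $j$ are both unoccupied at the decision epoch and no other robot's action is touched, $\sigma'$ remains collision-free.

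Next I would prove $V_{\sigma'}(z_0)<V_\sigma(z_0)$ by a coupling argument that parallels the single-robot swap in Corollary~\ref{cor:start_at_longer_robot}. Hold the actions of all $M-1$ other robots fixed and identical in the two systems, and couple the arrival sample paths via the mirror map $\phi_{ij}$ that interchanges the Bernoulli$(p)$ streams at $i$ and $j$; by the identical-arrival assumption, $\phi_{ij}$ is measure-preserving. Under this coupling, every queue other than $i$ and $j$ evolves identically in the two systems because all visits and service epochs at those queues are unchanged. The remaining discrepancy is exactly one robot starting service at $i$ versus $j$, which is the configuration addressed by Corollary~\ref{cor:start_at_longer_robot}; applying it yields the strict inequality $V_\sigma(z_0)-V_{\sigma'}(z_0)>0$.

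To finish, I would iterate the swap. Each pairwise swap strictly reduces the discounted cost and increases by at least one the number of top-$K$ locations covered by the current assignment, so after at most $K$ iterations the procedure terminates at some $\sigma^\star$ that covers the $K$ longest locations. Chaining the strict inequalities gives $V_{\sigma^\star}(z_0)<V_\sigma(z_0)$, establishing strict dominance. When $K\le|\mathcal S|$, the same argument forces any instantaneous optimum to assign the $K$ robots to $K$ distinct longest locations, which is the final sentence of the corollary.

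The main obstacle is making the lift from the single-robot coupling of Corollary~\ref{cor:start_at_longer_robot} to a multi-robot environment fully rigorous. One must verify that, because only a single robot is rerouted between two previously unoccupied locations, the per-location service-feasibility and no-collision constraints continue to hold, the future actions of the other $M-1$ robots can be taken identical in both systems, and the arrival-driven dynamics of every queue other than $i$ and $j$ are pathwise identical under $\phi_{ij}$. Once these facts are recorded, the entire backlog difference concentrates on queues $i$ and $j$ and the single-robot argument transfers verbatim; this decoupling is mild under the no-colocation hypothesis but should be stated explicitly when writing the full proof.
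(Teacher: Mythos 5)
Your proposal is correct and follows essentially the same route as the paper: identify an inversion (a covered shorter location $i$ and an uncovered longer location $j$), redirect the single robot from $i$ to $j$ while holding all other robots fixed, invoke the single-robot longer-queue result to get a strict improvement, and iterate the exchange until the $K$ longest locations are covered. The only cosmetic difference is that you route the swap through Corollary~\ref{cor:start_at_longer_robot} while the paper cites Proposition~\ref{prop:longer_is_better_action} directly; the multi-robot decoupling issue you flag is likewise left implicit in the paper's own one-line justification ``(other robots unchanged).''
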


\begin{proof}
Suppose some feasible assignment leaves a longer location $j$ uncovered while assigning a robot $r$ to a shorter location $i$ with $x_i<x_j$. By Proposition~\ref{prop:longer_is_better_action}, redirecting $r$ from $i$ to $j$ strictly improves the discounted cost (other robots unchanged). Repeating this pairwise exchange eliminates all such inversions; the process terminates with robots covering the $K$ longest locations. Each exchange yields a strict improvement on a set of sample paths of positive probability, so the final assignment strictly dominates the initial one.
\end{proof}

\begin{theorem}\label{thm:ESL-opt}
Consider the $M$–robots, $N$–locations model with i.i.d.\ Bernoulli$(p)$ arrivals at each location,
unit service time, one–slot switching delay, no co–location, unit holding cost, and discount factor $\beta\in(0,1)$. There is an optimal Exhaustive–Serve–Longest policy as following:
\begin{enumerate}
\item If a robot is at a nonempty location, it \textbf{serves} exhaustively.
\item If a robot is at an empty location, it \textbf{switches} to a longest unoccupied nonempty location.
\end{enumerate}
\end{theorem}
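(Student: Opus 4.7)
The plan is to combine the three preceding propositions and Corollary~\ref{cor:assign_longest} to show that the ESL action attains the minimum in the Bellman optimality equation at every state. Since the state space is countable and the action set finite, \cite{Puterman2014} guarantees an optimal stationary deterministic policy exists; it then suffices to argue that any admissible joint action that disagrees with ESL at some state $z$ is strictly dominated, so that the Bellman minimizer coincides with the ESL prescription at $z$.

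First, for clause~(1), fix any state $z$ and any admissible joint action $u$ in which some robot $r$ is located at a nonempty queue $i$ but $u_r\neq\textbf{serve}$. Proposition~\ref{prop:MsrvNq-exhaust} (applied in a one-shot-deviation form: change only $u_r$ to $\textbf{serve}$, keep every other robot's action the same, then follow any continuation policy) shows that this replacement strictly lowers the $\beta$-discounted cost; feasibility of the swap is automatic because $\textbf{serve}$ at $i$ does not create a collision beyond what $u$ already enforces. Iterating this single-robot exchange over every robot at a nonempty location shows that any minimizer of the Bellman operator assigns $\textbf{serve}$ to each such robot, giving clause~(1).

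Second, for clause~(2), restrict attention to robots at empty locations in the reduced action profile obtained above. Proposition~\ref{prop:MsrvNq-noIdle} eliminates $\textbf{idle}$ for any such robot $r$ whenever an unoccupied nonempty location exists, by exhibiting a strictly better $\textbf{switch}$ deviation. It remains to pin down the switch target: Corollary~\ref{cor:assign_longest}, which chains the pairwise swap arguments based on Proposition~\ref{prop:longer_is_better_action}, shows that any collision-free switch assignment of the $K$ robots that fails to cover the $K$ longest unoccupied nonempty queues is strictly dominated by one that does. Tie-breaking among multiple longest queues is arbitrary, matching the ``a longest'' phrasing in the theorem.

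The main obstacle will be bookkeeping around the joint-feasibility constraints~\eqref{eq:no-colocation}--\eqref{eq:no-collision} while performing the pairwise exchanges: when redirecting a robot from a shorter target $i$ to a longer target $j$, I must verify that $j$ remains unoccupied in the modified joint action. This is handled by processing the single-robot swaps in a fixed order (say, longest-queue-first, or equivalently matching robots to queues by a sorted assignment), so that admissibility is preserved at every intermediate step; Corollary~\ref{cor:assign_longest} is stated over unoccupied targets precisely to make this legitimate. A secondary point to address is the corner case in which the number of idle-at-empty robots exceeds the number of unoccupied nonempty queues; then clause~(2) is vacuous for the surplus robots, and any admissible completion (idling, or switching to an empty queue) is acceptable since no productive serving action is available to them in the next slot under any policy. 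With these details in place, the Bellman minimizer at every $z$ equals the ESL action, and optimality of ESL follows.
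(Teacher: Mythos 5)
Your proposal is correct and follows essentially the same route as the paper's proof: invoke Proposition~\ref{prop:MsrvNq-exhaust} to force \textbf{serve} at nonempty locations, Proposition~\ref{prop:MsrvNq-noIdle} to rule out idling at empty ones, and Proposition~\ref{prop:longer_is_better_action} with Corollaries~\ref{cor:start_at_longer_robot} and~\ref{cor:assign_longest} to pin the switch targets to the longest unoccupied nonempty locations. Your Bellman/one-shot-deviation framing and your explicit handling of the feasibility bookkeeping and the surplus-robot corner case are somewhat more careful than the paper's (which simply chains the cited results), but they do not constitute a different argument.
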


\begin{proof}
By Proposition~\ref{prop:MsrvNq-exhaust}, any policy that \emph{switches} away or \emph{idles} at a location
with $x_i>0$ is strictly suboptimal. Hence, at any nonempty current location, \textbf{serve} is uniquely optimal. When the current location is empty, Proposition~\ref{prop:longer_is_better_action} shows that \textbf{idling} is strictly suboptimal whenever a feasible switch to a nonempty location exists. Thus an optimal action must \textbf{switch} to some nonempty location. Given two candidate nonempty destinations $i$ and $j$ with $x_i<x_j$, Proposition~\ref{prop:longer_is_better_action}
establishes that switching to the shorter one is strictly suboptimal. If the robot can start serving immediately at either of two nonempty locations, Corollary~\ref{cor:start_at_longer_robot} says to start at the longer one. With multiple robots, Corollary~\ref{cor:assign_longest} yields a pairwise–exchange improvement:
any collision–free assignment that fails to cover the $K$ longest available locations is strictly dominated by one that does. Hence the optimal joint action assigns robots to distinct longest nonempty locations.

Combining these parts gives the ESL rule: serve when nonempty; otherwise switch to a longest unoccupied nonempty location.
\end{proof}

\section{Experiments and results}
We empirically evaluate three robot-routing policies on a stylized queueing environment and compare their performance across traffic loads and robot-to-location ratios. We first describe the simulation setting (\S\ref{sec:sim-scenario-queues}), then the evaluation metrics (\S\ref{sec:metrics-queues}), and finally the results (\S\ref{sec:results-queues}).

\subsection{Simulation Scenarios}
\label{sec:sim-scenario-queues}
The environment consists of $N$ independent and identical queues (locations) and $M$ identical robots (servers). In every slot, a robot located at queue $i$ may (i) \textbf{serve} one task if available, (ii) \textbf{switch} to another location, or (iii) \textbf{idle} at its current location. Switching incurs a deterministic travel delay of one slot during which no service is provided.

We study policies that make decisions at each slot under identical information:
\begin{itemize}
    \item \textbf{Exhaust and Switch to Longest (ESL)}: each robot first serves if its current location is nonempty; otherwise robots are assigned to the longest distinct locations (ties broken by lower index).
    \item \textbf{First-Come-First-Serve per Task (FCFS)}: each location maintains per-task ages; at each slot, robots prioritize the location whose \emph{oldest waiting task} has the largest age. Ties favor staying at one’s current location, then lower index.
   \item \textbf{Cyclic with Fixed Dwell (cyclic)}: locations are partitioned evenly among robots; each robot cycles through its assigned subset, spending a fixed dwell time at each location before switching \cite{Yu2015StochasticArrivals}. The dwell time can be set by solving an optimization problem with two objectives: (i) balancing the average number of events to be collected at each station so that no station receives insufficient or excessive monitoring effort, and (ii) minimizing the maximum delay in observing two consecutive events generated by the same process between policy cycles. Since our problem is symmetric in terms of the arrival rates and travel time between locations, we can allocate $n = N/M$ locations to each robot to monitor, set dwell times as $t_1=\dots=t_N=t= T_{obs}/n$, and solve the 
   \[
    \underset{T_{\mathrm{obs}}>0}{\operatorname*{arg\,min}}\;
    \frac{T_{\mathrm{obs}}+n-\dfrac{T_{\mathrm{obs}}}{n}+\dfrac{T_{\mathrm{obs}}}{n}\,\bigl(1-p\bigr)^{T_{\mathrm{obs}}/n}}
    {1-\bigl(1-p\bigr)^{T_{\mathrm{obs}}/n}}
    \]
   to find the total monitoring time for each robot.
\end{itemize}

We vary the robot–location ratio $\rho = M/N \in \{1/3, 1/2\}$ and the traffic intensity via a common per-queue arrival probability $p \in \{\alpha\rho: \alpha \in \{0.2, 0.5, 0.8\}\}$ to span light-to-heavy loads while maintaining $p<1$. We use identical $p_i\!=\!p$ across queues. Each experiment runs for $10000$ slots per episode and is replicated for $100$ i.i.d.\ episodes.

\subsection{Evaluation Metrics}
\label{sec:metrics-queues}
We report:
\begin{itemize}
    \item \textbf{Discounted expected holding cost} (lower is better): per-episode discounted sum of total queue length, averaged over episodes; reported as mean $\pm$ 95\% CI.
    \item \textbf{Mean queue length per location} (undiscounted): within-episode time average of $\tfrac{1}{N}\sum_{i=1}^{N} x_i(t)$, then averaged across episodes; reported as mean $\pm$ 95\% CI.
    \item \textbf{Action-time fractions}: fraction of robot-time spent \emph{serving}, \emph{switching}, and \emph{idling}, computed per episode over $M\cdot T$ decisions, then averaged across episodes; each reported as mean $\pm$ 95\% CI.
\end{itemize}

\subsection{Results}
\label{sec:results-queues}
We evaluate the three policies on the grid $\rho\in\{1/3,1/2\}$ and $\alpha\in\{0.2,0.5,0.8\}$ (with $p=\alpha\rho$). For each $(\rho,\alpha)$, we show three grouped bar charts (one per metric) with error bars indicating 95\% CIs. In the following we summarize qualitative trends; the full numerical results (means and CI) are available in tables \ref{tab:m6r2} and \ref{tab:m6r3}.

Across all scenarios, the Exhaustive-Serve-Longest (ESL) policy consistently dominates the alternatives. With $N{=}6$ and $M{=}2$ (top row of Fig.~\ref{fig:summary-grid}), ESL achieves the lowest discounted holding cost and the smallest mean queue length under \emph{light} and \emph{moderate} loads, while also exhibiting the most favorable action mix (highest serving fraction, lower switching). FCFS ranks second in these regimes and outperforms the cyclic strategy. Under \emph{high} load, however, FCFS degrades, its serving fraction drops while switching rises, leading to larger backlogs and higher cost; in this regime the cyclic policy overtakes FCFS, as its fixed dwell yields higher serving and lower switching than FCFS. ESL remains the best throughout.

With $N{=}6$ and $M{=}3$, the ordering is stable across all loads: ESL $>$ FCFS $>$ cyclic on both discounted cost and mean queue length. The action fractions explain this: ESL consistently delivers the largest serving share, followed by FCFS, with cyclic serving least and switching most. Comparing $N{=}3$ vs.\ $N{=}2$ at high load highlights the same mechanism behind the FCFS--cyclic crossover: adding a robot raises the serving fraction under FCFS sufficiently to surpass cyclic; with fewer robots ($N{=}2$), FCFS spends comparatively more time switching, allowing cyclic to edge it out at high load. Under heavy traffic, \textbf{ESL} dominates by a wide margin: its discounted cost and mean queue length are an order of magnitude lower than \textsc{FCFS} and \textsc{Cyclic} (e.g., \(795.85\) vs.\ \(3890.03/4924.27\); \(1.45\) vs.\ \(289.19/334.71\)). This gap aligns with action fractions: ESL sustains a higher serving share with restrained switching (\(0.7995\) serve, \(0.1874\) switch), whereas FCFS/Cyclic spend more time switching, allowing backlogs to explode at high load. Overall, ESL’s ability to keep robots on the most productive queues (serve-in-place when nonempty, coordinated assignment otherwise) drives its uniform advantage across ratios and loads.

\begin{figure*}[t]
\centering

\begin{subfigure}{0.32\textwidth}
  \centering
  \includegraphics[width=\linewidth]{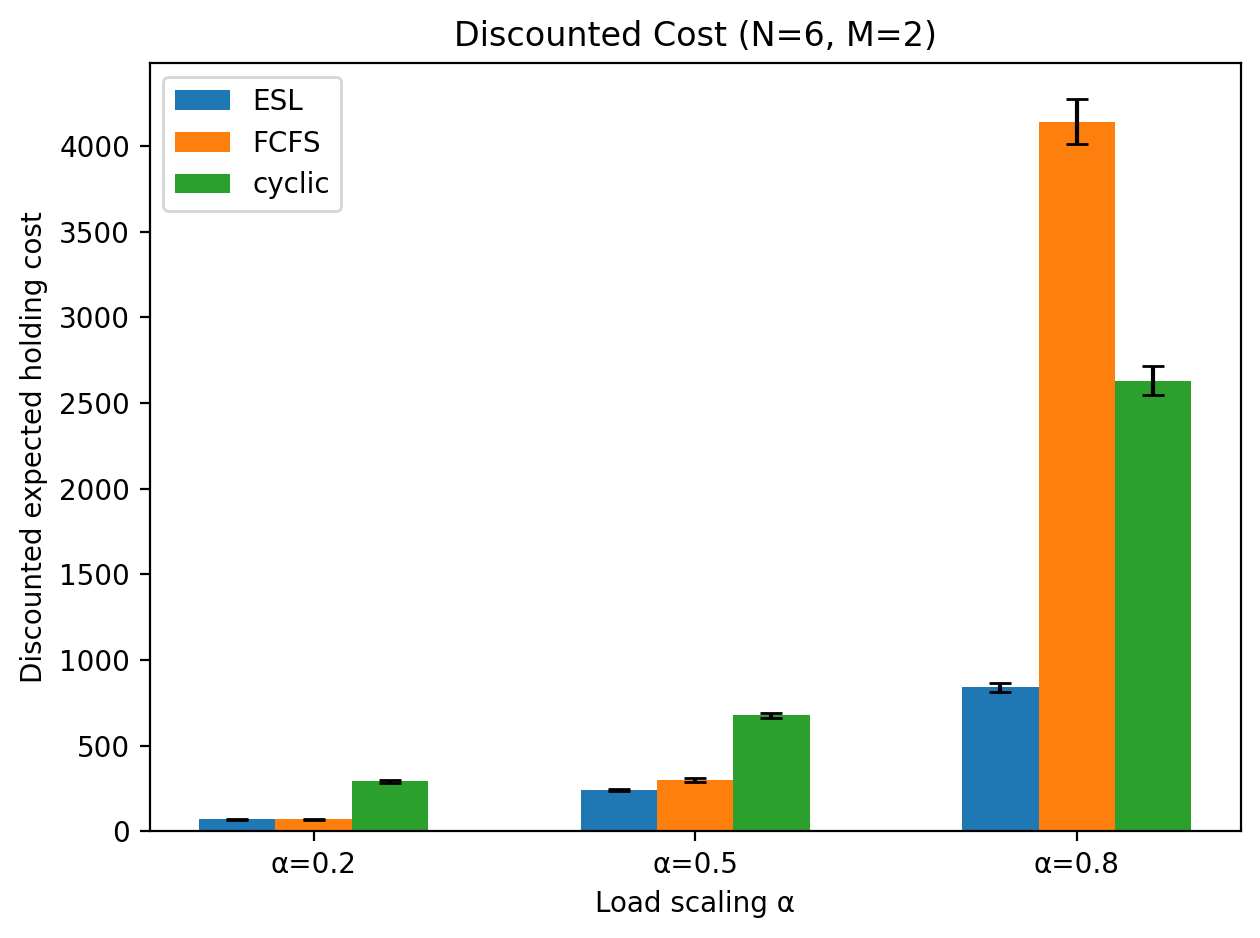}
  \caption{Discounted cost, $N=6,\ M=2$}
\end{subfigure}\hfill
\begin{subfigure}{0.32\textwidth}
  \centering
  \includegraphics[width=\linewidth]{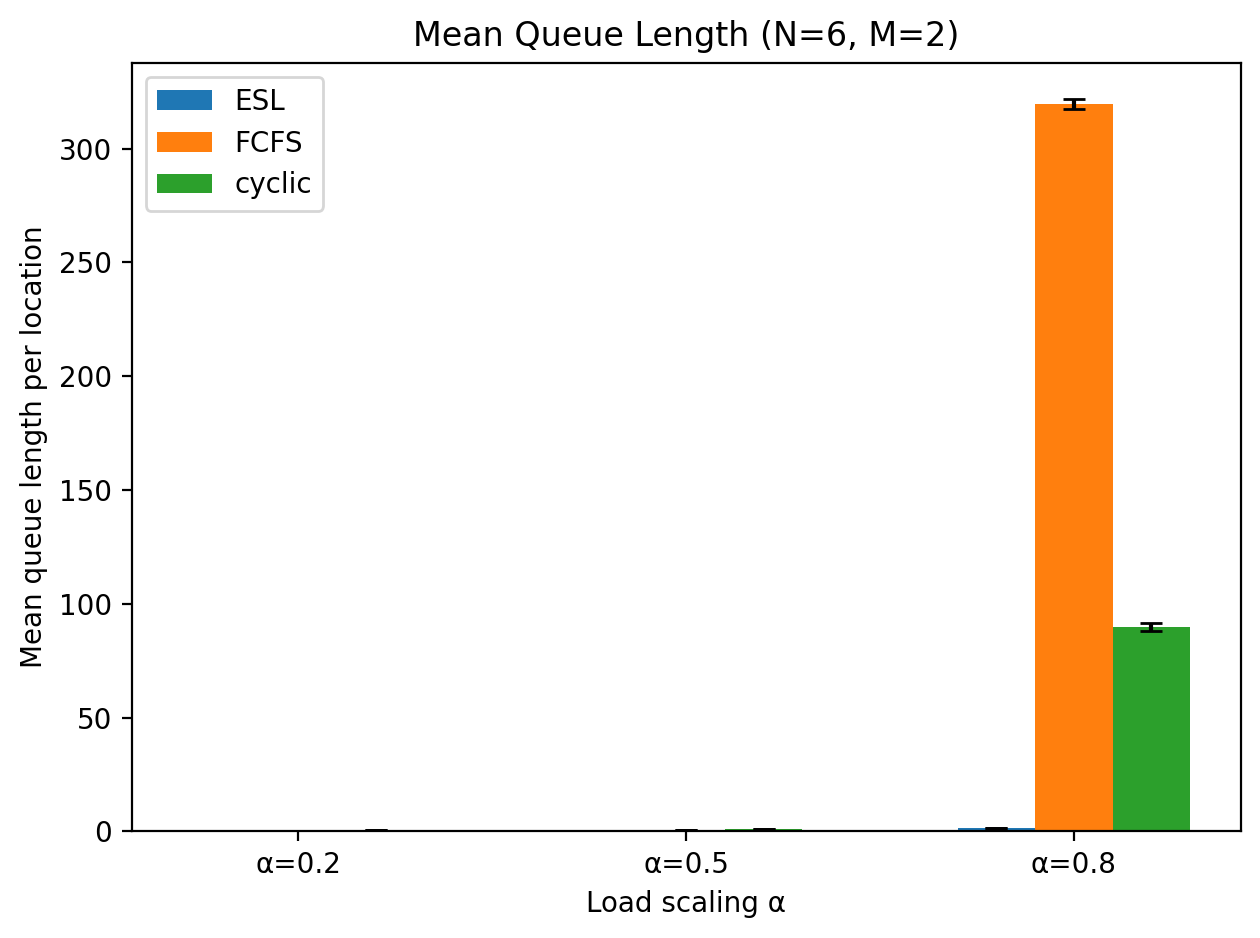}
  \caption{Mean queue length, $N=6,\ M=2$}
\end{subfigure}\hfill
\begin{subfigure}{0.32\textwidth}
  \centering
  \includegraphics[width=\linewidth]{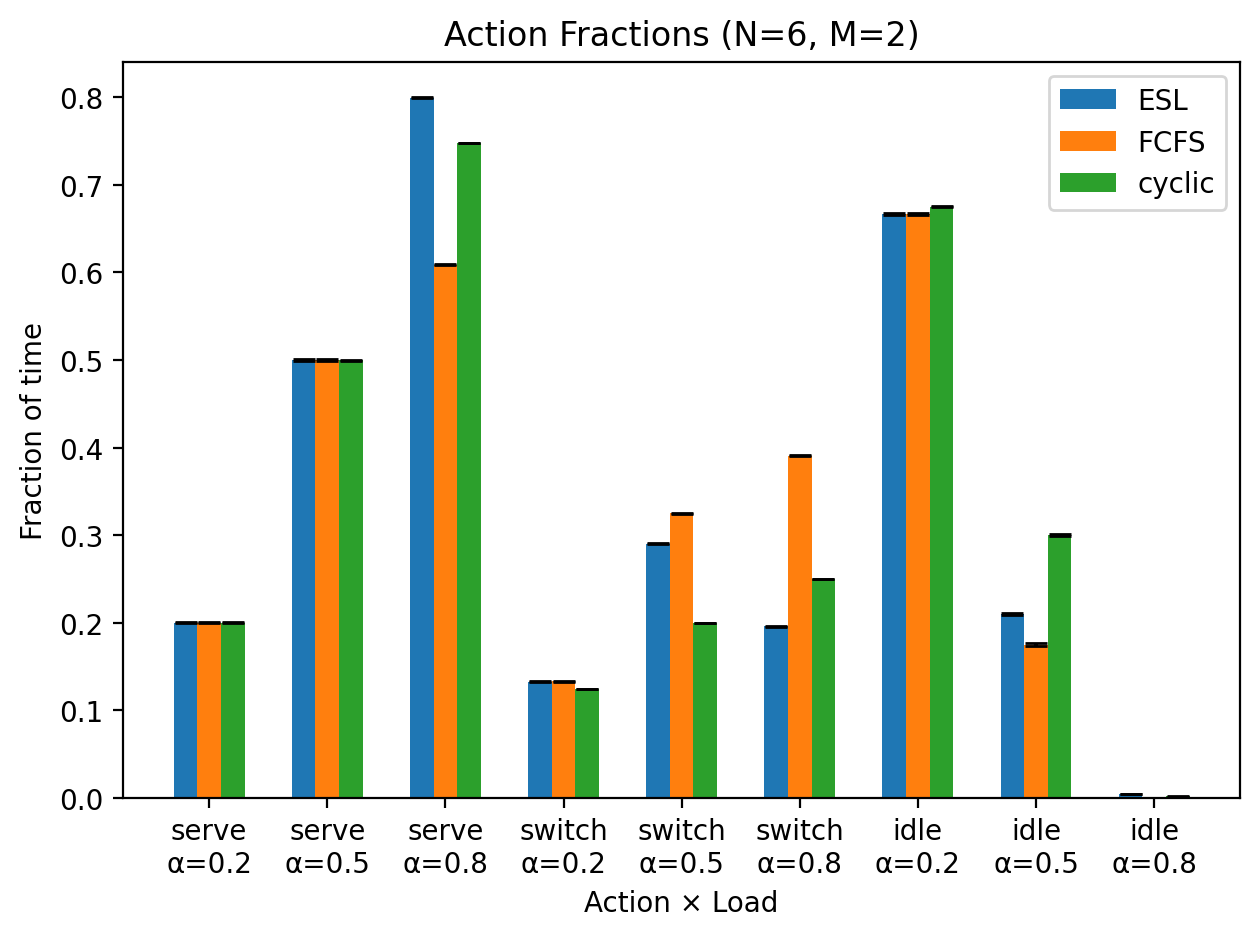}
  \caption{Action fractions, $N=6,\ M=2$}
\end{subfigure}

\vspace{0.6em}

\begin{subfigure}{0.32\textwidth}
  \centering
  \includegraphics[width=\linewidth]{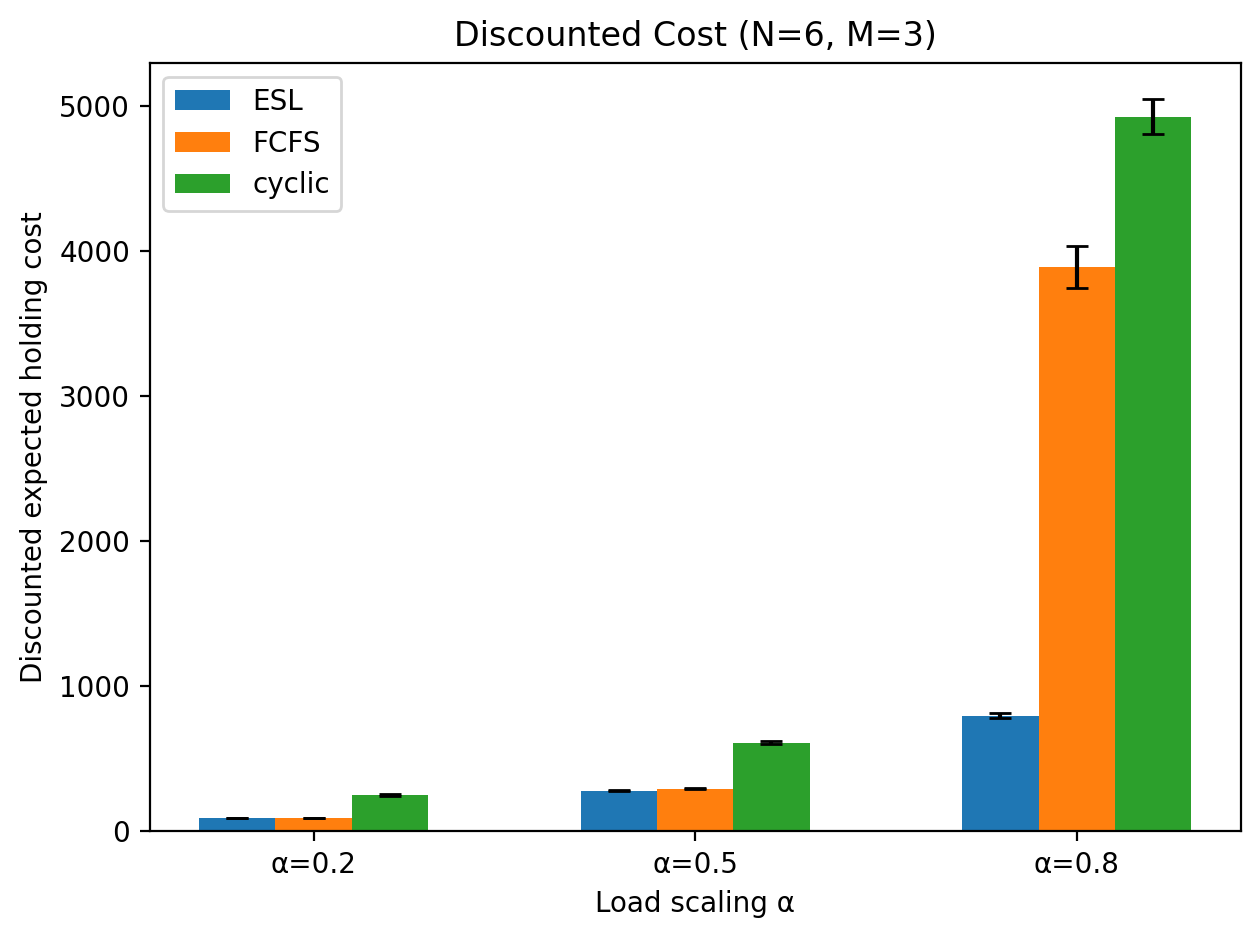}
  \caption{Discounted cost, $N=6,\ M=3$}
\end{subfigure}\hfill
\begin{subfigure}{0.32\textwidth}
  \centering
  \includegraphics[width=\linewidth]{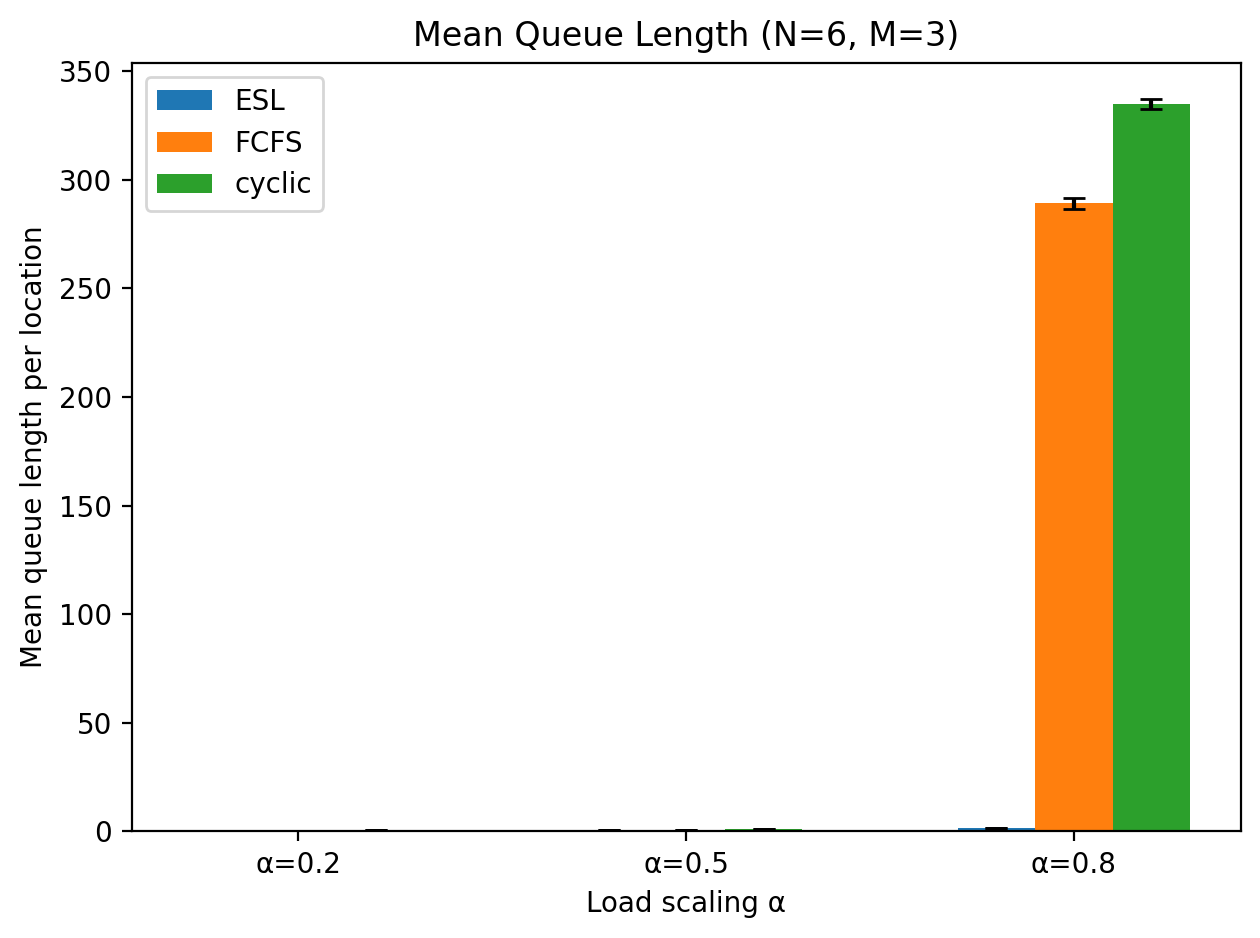}
  \caption{Mean queue length, $N=6,\ M=3$}
\end{subfigure}\hfill
\begin{subfigure}{0.32\textwidth}
  \centering
  \includegraphics[width=\linewidth]{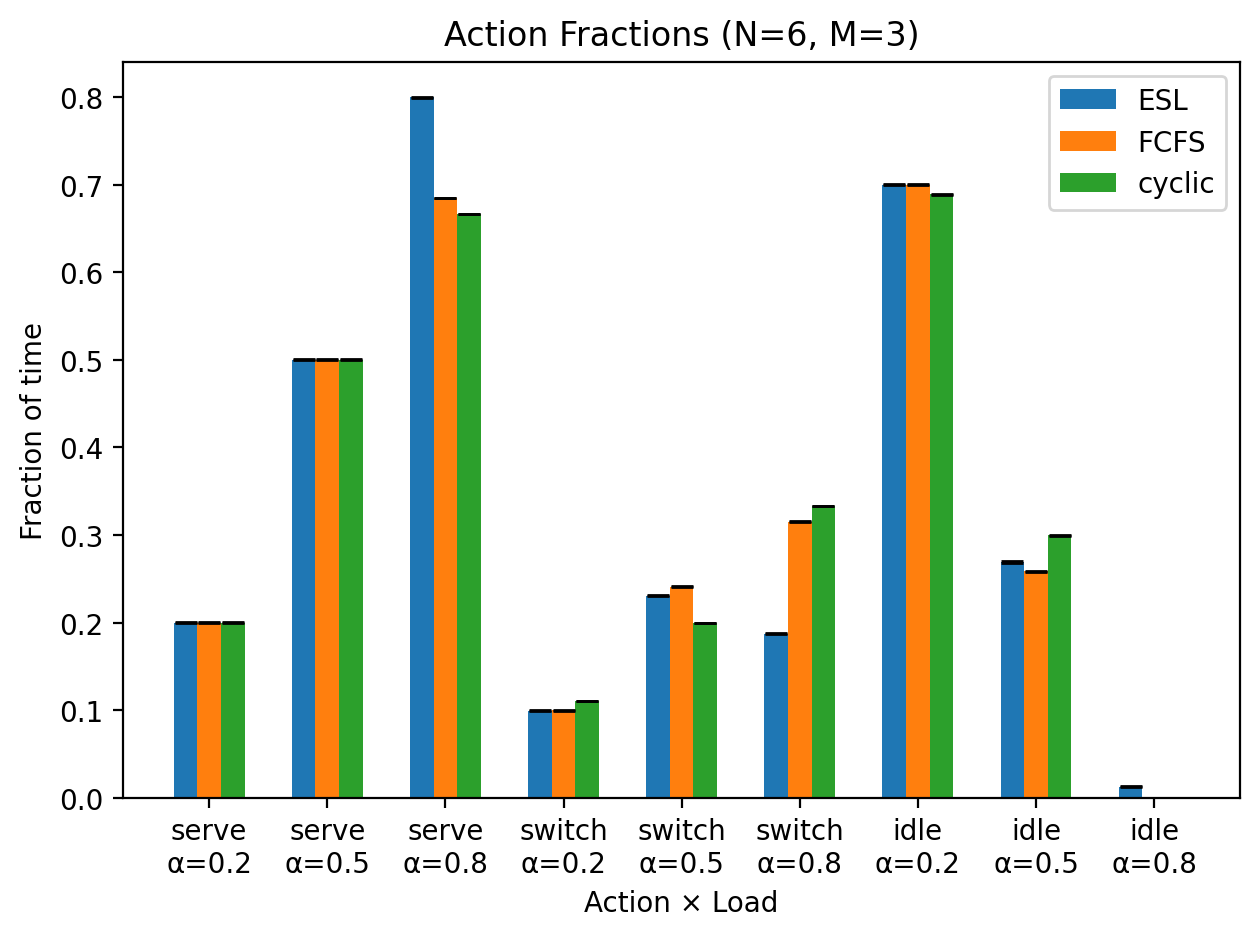}
  \caption{Action fractions, $N=6,\ M=3$}
\end{subfigure}

\caption{Policy comparison across loads $\alpha\in\{0.2,0.5,0.8\}$ for two robot–location ratios. Columns: discounted cost, mean queue length, action fractions. Top row: $M=2$; bottom row: $M=3$.}
\label{fig:summary-grid}
\end{figure*}

\begin{table*}[t]
\centering
\caption{Results for $N=6$, $M=2$ with $p=\alpha\cdot M/N$. Values are mean $\pm$ 95\% CI. Best values are bold.}
\label{tab:m6r2}
\small
\setlength{\tabcolsep}{3pt}
\resizebox{\textwidth}{!}{%
\begin{tabular}{cccccccc}
\hline
$\alpha$ & $p$ & Policy & Discounted Cost & Mean Q Length & Serve & Switch & Idle \\
\hline
0.2 & 0.0667 & ESL    & $\mathbf{70.8507 \pm 1.7885}$ & $\mathbf{0.1191 \pm 0.0004}$ & $\mathbf{0.2002 \pm 0.0006}$ & $0.1329 \pm 0.0005$ & $0.6669 \pm 0.0011$ \\
0.2 & 0.0667 & FCFS   & $71.1520 \pm 1.8040$          & $0.1194 \pm 0.0004$          & $\mathbf{0.2002 \pm 0.0006}$ & $0.1334 \pm 0.0005$ & $0.6664 \pm 0.0011$ \\
0.2 & 0.0667 & cyclic & $293.6874 \pm 8.3682$         & $0.5225 \pm 0.0022$          & $0.2001 \pm 0.0006$          & $\mathbf{0.1250 \pm 0.0000}$ & $0.6749 \pm 0.0006$ \\
\hline
0.5 & 0.1667 & ESL    & $\mathbf{241.9230 \pm 5.4897}$ & $\mathbf{0.4111 \pm 0.0017}$ & $\mathbf{0.4999 \pm 0.0009}$ & $0.2905 \pm 0.0005$ & $0.2096 \pm 0.0012$ \\
0.5 & 0.1667 & FCFS   & $299.4461 \pm 12.5402$         & $0.5327 \pm 0.0054$          & $0.4999 \pm 0.0009$          & $0.3250 \pm 0.0007$ & $0.1751 \pm 0.0015$ \\
0.5 & 0.1667 & cyclic & $677.2539 \pm 14.9217$         & $1.2169 \pm 0.0049$          & $0.4997 \pm 0.0009$          & $\mathbf{0.2000 \pm 0.0000}$ & $0.3003 \pm 0.0009$ \\
\hline
0.8 & 0.2667 & ESL    & $\mathbf{841.5381 \pm 26.4326}$ & $\mathbf{1.6236 \pm 0.0103}$ & $\mathbf{0.7996 \pm 0.0010}$ & $\mathbf{0.1958 \pm 0.0009}$ & $0.0045 \pm 0.0002$ \\
0.8 & 0.2667 & FCFS   & $4141.9576 \pm 131.0263$        & $319.6290 \pm 2.0536$        & $0.6090 \pm 0.0003$          & $0.3908 \pm 0.0003$ & $0.0002 \pm 0.0000$ \\
0.8 & 0.2667 & cyclic & $2629.3946 \pm 84.9285$         & $89.8803 \pm 1.8559$         & $0.7482 \pm 0.0001$          & $0.2500 \pm 0.0000$ & $0.0018 \pm 0.0001$ \\
\hline
\end{tabular}%
}
\end{table*}

\begin{table*}[t]
\centering
\caption{Results for $N=6$, $M=3$ with $p=\alpha\cdot M/N$. Values are mean $\pm$ 95\% CI. Best values are bold.}
\label{tab:m6r3}
\small
\setlength{\tabcolsep}{3pt}
\resizebox{\textwidth}{!}{%
\begin{tabular}{cccccccc}
\hline
$\alpha$ & $p$ & Policy & Discounted Cost & Mean Q Length & Serve & Switch & Idle \\
\hline
0.2 & 0.1000 & ESL    & $\mathbf{92.5771 \pm 1.6886}$ & $\mathbf{0.1567 \pm 0.0004}$ & $\mathbf{0.2002 \pm 0.0005}$ & $\mathbf{0.0999 \pm 0.0004}$ & $0.6999 \pm 0.0008$ \\
0.2 & 0.1000 & FCFS   & $92.6410 \pm 1.6937$          & $0.1568 \pm 0.0004$          & $\mathbf{0.2002 \pm 0.0005}$ & $0.0999 \pm 0.0004$ & $0.6999 \pm 0.0008$ \\
0.2 & 0.1000 & cyclic & $250.8833 \pm 6.5871$         & $0.4393 \pm 0.0016$          & $0.2001 \pm 0.0005$          & $0.1111 \pm 0.0000$ & $0.6888 \pm 0.0005$ \\
\hline
0.5 & 0.2500 & ESL    & $\mathbf{280.1848 \pm 4.2107}$ & $\mathbf{0.4707 \pm 0.0011}$ & $\mathbf{0.5001 \pm 0.0006}$ & $0.2308 \pm 0.0004$ & $0.2691 \pm 0.0009$ \\
0.5 & 0.2500 & FCFS   & $294.7774 \pm 5.6096$          & $0.4951 \pm 0.0015$          & $0.5001 \pm 0.0006$          & $0.2413 \pm 0.0004$ & $0.2586 \pm 0.0009$ \\
0.5 & 0.2500 & cyclic & $610.6562 \pm 10.8861$         & $1.0722 \pm 0.0030$          & $0.5000 \pm 0.0006$          & $\mathbf{0.2000 \pm 0.0000}$ & $0.3000 \pm 0.0006$ \\
\hline
0.8 & 0.4000 & ESL    & $\mathbf{795.8497 \pm 17.6074}$ & $\mathbf{1.4522 \pm 0.0067}$ & $\mathbf{0.7995 \pm 0.0008}$ & $\mathbf{0.1874 \pm 0.0006}$ & $0.0131 \pm 0.0003$ \\
0.8 & 0.4000 & FCFS   & $3890.0326 \pm 145.1912$        & $289.1929 \pm 2.4452$        & $0.6844 \pm 0.0002$          & $0.3154 \pm 0.0002$ & $0.0002 \pm 0.0000$ \\
0.8 & 0.4000 & cyclic & $4924.2697 \pm 118.9448$        & $334.7078 \pm 2.3403$        & $0.6663 \pm 0.0000$          & $0.3333 \pm 0.0000$ & $0.0004 \pm 0.0000$ \\
\hline
\end{tabular}%
}
\end{table*}

\section{CONCLUSIONS}

We presented ESL (Exhaustive-Serve-Longest), a simple, implementable control rule for multi-robot, multi-location service systems with one-slot switching delays, symmetric stochastic arrivals, and service rates. ESL is grounded in two structural results: exhaustive service is strictly better than idling or switching away and, when moving, prefer longer unoccupied locations. Against two strong baselines, per-task FCFS (oldest-first) and an optimized-dwell Cyclic policy, ESL consistently achieved lower discounted holding costs, shorter time-average queues, and more favorable action mixes across server–location ratios and loads; notably, ESL remained robust at high load where FCFS can over-switch and Cyclic under-react. These empirical gains support ESL as a practical default for real-time task allocation. 

The limitations suggest clear next steps: finding a structure for more general cases with (i) asymmetric arrival rates, and (ii) different setup times. Future work will extend analysis and testing to heterogeneous travel/service times, non-identical arrivals and deadlines, and learning-augmented variants toward multi-robot deployments.



\section{APPENDIX}

In this section, we will write down the proofs for the optimality switching to the longest queue and the corollaries for the case of having multiple robots in the system.

\begin{proposition}\label{prop:MsrvNq-longer}
Consider an $M$–server, $N$–queue system with identical Bernoulli(p) arrivals for all locations. Fix a decision epoch at which a particular server $r$ has just exhausted its queue $k'$ (so $x_{k'}=0$). If there exist two other \emph{unoccupied} nonempty queues $i$ and $j$ with $x_i<x_j$, then it is strictly suboptimal to switch $r$ to the shorter queue $i$.
\end{proposition}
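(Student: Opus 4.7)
The plan is to generalize the single–robot sample–path coupling to the $M$–robot setting by pairing the arrival–mirror map $\phi_{ij}$ between queues $i$ and $j$ with a symmetric $i\!\leftrightarrow\!j$ relabeling of every other robot's actions. Without loss of generality take $i=1$, $j=2$, $x_1<x_2$, and couple two systems $G$ and $\Pi$ on the same initial state, with $\Pi$ driven by the swapped arrival sample path $\phi_{12}(\omega)$. Because the $N$ arrival streams are i.i.d.\ Bernoulli$(p)$, $\phi_{12}$ is measure preserving, so substituting $\phi_{12}(\omega)$ for $\Pi$ does not change expected cost and legitimizes a pointwise comparison.

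Next I would define $g$ to switch robot $r$ from $k'$ to the shorter queue $1$ at $t=0$ and then serve exhaustively, while the remaining $M-1$ robots follow an arbitrary stationary feasible rule; policy $\pi$ instead sends $r$ to the longer queue $2$ and, for every other robot, applies the $1\!\leftrightarrow\!2$ relabeling to each of $g$'s scheduled actions. Let $\tau$ be the time at which $g$ exhausts queue $1$. Under the swapped arrivals, $\pi$ serves queue $2$ for exactly $\tau+(x_2-x_1)$ slots, producing $x_2-x_1$ more completions than $g$ over that window, and from the coupling time onward $\pi$ mimics $g$'s future actions with labels $1,2$ transposed, so the two systems couple in state.

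The main obstacle is feasibility of the relabeled policy during the interval $(\tau,\tau+(x_2-x_1)]$: if some other robot is scheduled under $g$ to switch into queue $1$ during that window, the relabeling directs it into queue $2$ under $\pi$, where robot $r$ still resides, potentially violating \eqref{eq:no-collision}. I would handle this by appealing to Proposition~\ref{prop:MsrvNq-exhaust}: any such switch under $g$ must be initiated from an already empty location, so the offending robot can be held at its current empty queue under $\pi$ for the bounded delay $x_2-x_1$ without violating feasibility and without altering the evolution of queues outside $\{1,2\}$. The $x_2-x_1$ idle slots introduce at most a bounded discounted penalty at those other queues, which is strictly dominated by the $x_2-x_1$ extra completions already secured at queue $2$ and the subsequent state coupling.

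Finally I would assemble the cost difference via the identity $x^g(t)-x^\pi(t)=D^\pi(t)-D^g(t)$ from \eqref{eq:deps_backlogs}: the departure gap is $0$ for $t\le\tau$, at least $1$ on each slot of $(\tau,\tau+(x_2-x_1)]$, and $0$ again after coupling, so
\[
V_g(z_0)-V_\pi(z_0)\;=\;\mathbb{E}\!\left[\sum_{t=0}^{\infty}\beta^{t}\bigl(D^\pi(t)-D^g(t)\bigr)\right]\;\ge\;\beta^{\tau+1}\,\frac{1-\beta^{x_2-x_1}}{1-\beta}\;>\;0,
\]
contradicting the optimality of switching $r$ to the shorter queue. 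The hard part is the feasibility bookkeeping in the coupling window; once that is settled by the exhaustive–service structural result, the rest of the argument parallels the single–robot proof almost verbatim.
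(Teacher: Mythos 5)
Your coupling skeleton — the measure-preserving swap $\phi_{12}$, the exhaustion time $\tau$, the $k=x_2-x_1$ extra completions secured at the longer queue, and the departure-gap identity yielding $\beta^{\tau+1}(1-\beta^{k})/(1-\beta)$ — is exactly the paper's argument, and you correctly identify the one genuinely multi-robot difficulty: the $1\leftrightarrow 2$ relabeling of the other robots' actions can send a robot into the queue still occupied by $r$, violating \eqref{eq:no-collision}. The gap is in your repair. You idle the offending robot at its (empty) current location for up to $k$ slots and assert the resulting penalty is ``strictly dominated'' by the $k$ extra completions; this is not proved and is not true in general. Holding robot $s$ for $\delta\le k$ slots shifts its entire subsequent service schedule by $\delta$, so every later task it completes is delayed by $\delta$ slots, each contributing on the order of $\beta^{t}(1-\beta^{\delta})/(1-\beta)$ of additional discounted backlog; summed over the many services $s$ performs afterwards (and over possibly several offending robots), this loss is of the same order as — and can exceed — the gain $\beta^{\tau+1}(1-\beta^{k})/(1-\beta)$, so the sign of $V_g-V_\pi$ does not follow. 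The delay also breaks the ``subsequent state coupling'' you invoke, since robot $s$'s queues under $\Pi$ no longer track their relabeled counterparts under $G$. A further small slip: you justify ``any such switch under $g$ is initiated from an empty location'' by Proposition~\ref{prop:MsrvNq-exhaust}, but your $g$ lets the other robots follow an \emph{arbitrary} feasible rule, to which that proposition need not apply; you would have to restrict to $g$ optimal (or exhaustive) for the other robots to use it.

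The paper sidesteps all of this by construction rather than by repair: the competing policy $g$ is defined so that every other server's switch destination satisfies $j_s^\star(t)\notin\{i,j\}$ for all $t\le\tau+k$, so no robot other than $r$ ever enters queue $1$ or $2$ during the critical window and the relabeled policy $\pi$ is automatically feasible. That restriction is what makes the single-robot argument carry over verbatim; your version, which aims at an arbitrary $g$, is more ambitious but needs a quantitatively controlled feasibility patch that the idling device does not supply.
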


\begin{proof}
We argue by contradiction with a sample–path coupling. Without loss of generality relabel $i=1$ and $j=2$. Assume that we have policies $g$, that sends $r$ to queue $1$ and then to $2$ right after exhausting $1$, and policy $\pi$, that sends $r$ to $2$ and $1$, as we will describe later.
Let the initial state be
\[
z(0)=(s_1(0),\dots,s_M(0);\ x_1,x_2,\dots,x_{k'}=0,\dots,x_N),
\]
with $x_2>x_1>0$, and suppose queues $1$ and $2$ are both unoccupied at $t=0$. We define two seperate systems running under policies $g$ and $\pi$.

\smallskip
\noindent\textbf{System $G$ (policy $g$):}
\begin{itemize}
\item At $t=0$, $g$ sends $r$ to queue $1$.
\item For every other server $s\neq r$ and for all $t\ge0$,
\[
u_s^{g}(t)=
\begin{cases}
\textbf{serve}, & x_{s_s(t)}(t)>0,\\[2pt]
\textbf{switch}\bigl(j_s^\star(t)\bigr),  & x_{s_s(t)}(t)=0.
\end{cases}
\]
with $j_s^\star(t) \notin \{i,j\}$, for $t \le \tau+k$ but $j_s^\star(t)$ could be any destination after that time, $\tau+k$ is defined later in the paper.
\item From $t=1$ onward, $r$ serves queue~1 exhaustively. Let $\tau$ be the first time slot at which queue $1$ is empty again under $G$.
\item At time $\tau$, server $r$ switches to queue $2$.
\item From $\tau$ beyond, server $r$ could switch to any server after exhausting.
\end{itemize}

\noindent\textbf{System $\Pi$ (policy $\pi$):}
\begin{itemize}
\item At $t=0$, $\pi$ sends $r$ to queue $2$.
\item For every other server $s\neq r$ and for all $t\ge0$, set $u_s^{\pi}(t)=u_s^{g}(t)$.
\item To align the two trajectories, we couple to an altered arrival process: let $\omega$ be the arrival realization for $G$; assign $\phi_{12}(\omega)$ to $\Pi$, where $\phi_{12}$ swaps the arrival coordinates of queues $1$ and $2$ at all time steps. Since arrivals are i.i.d.\ and symmetric across queues, this swap is measure preserving ($P\circ\phi_{12}^{-1}=P$).
\item From $t=1$ onward, $\pi$ serves queue~2 until time $\tau$. 
\item After $\tau$, $\pi$ serves \emph{exactly} $x_2-x_1$ additional tasks from queue~2 (ignoring any new arrivals after those services), and switches to queue $1$.
\item From this time on, it mimics $g$ with the roles of queues $1$ and $2$ swapped.
\end{itemize}
Because $x^g(0)=x^\pi(0)$ and the swapped coupling gives $A^g(t)=A^\pi(t)$ for all $t$, the value difference is driven purely by cumulative departures:
\begin{align*}
V_g(z_0)-V_\pi(z_0)
&=\mathbb E \left[\sum_{t=0}^{\infty}\beta^t\bigl(x^g(t)-x^\pi(t)\bigr)\right] \\
&=\mathbb E \left[\sum_{t=0}^{\infty}\beta^t\bigl(D^\pi(t)-D^g(t)\bigr)\right]  
\end{align*}

Up to time $\tau$ both systems get the same cost, and hence
\begin{align*}
&V_g(z(0)) - V_\pi(z(0)) \\
&= \beta^\tau \bigl[V_g(1,\dots,s_M(\tau);0,x_2+A_2(\tau),\dots) \\
&- V_\pi(2,\dots,s_M(\tau);x_1+A_2(\tau),\,x_2+A_1(\tau)-\tau,\dots)
    \bigr]
\end{align*}
Based on definition $x_1{+}A_1(\tau)-\tau=0$, and we can write $A_1(\tau)-\tau=-x_1$ and using $x_2-x_1=k$ we obtain
\begin{align*}
&V_g(z(0)) - V_\pi(z(0)) \\
&= \beta^\tau \bigl[V_g(1,\dots,s_M(\tau);0,x_2+A_2(\tau),\dots) \\
&- V_\pi(2,\dots,s_M(\tau);x_2-k+A_2(\tau),k,\dots)\bigr]
\end{align*}
Because $g$ switches away $k$ slots earlier than $\pi$, and $\pi$ does $k$ services before switching to queue $1$, we have a strict interval of slots on which $D^\pi(t)-D^g(t)\ge 1$. Hence
\begin{align*}
x^g(t)-x^{\pi}(t) &=\bigl[x^g(0)-x^\pi(0)\bigr] - \bigl[D^g(t)-D^\pi(t)\bigr] \notag \\
& = D^\pi(t)-D^g(t) = \begin{cases}
  0  & t\le \tau \\
  1   & \tau<t\le\tau+k \\
  0   & t > \tau+k
\end{cases}
\end{align*}
If we write down the difference between the discounted costs at time $\tau+k+1$, we'll get
\begin{align*}
&V_g(z(0)) - V_\pi(z(0)) \\
& = \sum_{t=\tau+1}^{\tau+k} \beta^t + \beta^{\tau+k+1} \bigl[V_g(1,\dots,s_M(\tau);A_1(\tau+k+1)-\\
& A_1(\tau), x_2+A_2(\tau+k+1)-k),\dots) - V_\pi(1,\dots,s_M(\tau);\\
& x_2+A_2(\tau+k+1)-k,A_1(\tau+k+1)-A_1(\tau),\dots)\bigr]
\end{align*}
This shows that the systems are coupled for $t\ge\tau+k+1$, except with the queue $1$ and $2$ swapped, and therefore:
\begin{align*}
&V_g(z(0)) - V_\pi(z(0)) = \sum_{t=\tau+1}^{\tau+k} \beta^t 
\end{align*}
Since this is true for every possible sample path:
\begin{align*}
&V_g(z(0)) - V_\pi(z(0)) = \mathbb{E}\Bigl[\sum_{t=\tau+1}^{\tau+k} \beta^t \Bigr] > 0
\end{align*}
Which means that policy $g$ is sub-optimal.
\end{proof}

\bibliographystyle{IEEEtran}
\bibliography{sources}

\end{document}